\title{Experience Replay with \\
Likelihood-free Importance Weights}
\author{%
  Samarth Sinha$^*$  %
  \\
 \ \ \ University of Toronto, Vector Institute \ \ \ \ \  \\
  \texttt{samarth.sinha@mail.utoronto.ca} \\
  \And
   Jiaming Song$^*$ %
   \\
   Stanford University \\
   \texttt{tsong@cs.stanford.edu} \\
  \AND
  Animesh Garg \\
  University of Toronto, Vector Institute, Nvidia \\
  \texttt{garg@cs.toronto.edu} \\
   \And
   Stefano Ermon \\
   Stanford University \\
   \texttt{ermon@cs.stanford.edu} \\
}
\newtheorem{lemma}{Lemma}
\def\eqref#1{Eq.(\ref{#1})}
\def\1{\bm{1}}
\def\norm#1{\lVert #1 \rVert}
\newcommand*\diff{\mathop{}\!\mathrm{d}}
\def\vx{{\bm{x}}}
\DeclareMathAlphabet{\mathsfit}{\encodingdefault}{\sfdefault}{m}{sl}
\SetMathAlphabet{\mathsfit}{bold}{\encodingdefault}{\sfdefault}{bx}{n}
\def\gA{{\mathcal{A}}}
\def\gB{{\mathcal{B}}}
\def\gD{{\mathcal{D}}}
\def\gE{{\mathcal{E}}}
\def\gN{{\mathcal{N}}}
\def\gP{{\mathcal{P}}}
\def\gQ{{\mathcal{Q}}}
\def\gS{{\mathcal{S}}}
\def\gX{{\mathcal{X}}}
\newcommand{\E}{\mathbb{E}}
\newcommand{\R}{\mathbb{R}}
\DeclareMathOperator*{\argmin}{arg\,min}
\newcommand{\bb}[1]{{\mathbb{#1}}}
\begin{document}

\maketitle

\begin{abstract}
The use of past experiences to accelerate temporal difference (TD) learning of value functions, or experience replay, is a key component in  deep reinforcement learning. Prioritization or reweighting of important experiences has shown to improve performance of TD learning algorithms.
In this work, we propose to reweight experiences based on their likelihood under the stationary distribution of the current policy. Using the corresponding reweighted TD objective, we implicitly encourage small approximation errors on the value function over frequently encountered states. We use a likelihood-free density ratio estimator over the replay buffer to assign the prioritization weights. 
We apply the proposed approach empirically on two competitive methods, Soft Actor Critic (SAC) and Twin Delayed Deep Deterministic policy gradient (TD3) -- over a suite of OpenAI gym tasks and achieve superior sample complexity compared to other baseline approaches. 
\end{abstract}

\renewcommand{\thefootnote}{\fnsymbol{footnote}}
\footnotetext[1]{The authors contributed equally to this work. Order determined by \url{random.org}.}
\section{Introduction}

Deep reinforcement learning methods have achieved much success in a wide variety of domains~\cite{mnih2016asynchronous,lillicrap2015continuous,horgan2018distributed}. While on-policy methods~\cite{schulman2017proximal} are effective, using off-policy data often yields better sample efficiency~\cite{haarnoja2018soft,fujimoto2018addressing}, which is critical when querying the environment is expensive and experiences are difficult to obtain. Experience replay~\cite{lin1992self} is a popular paradigm in off-policy reinforcement learning, where experiences stored in a replay memory can be reused to perform additional updates. When applied to temporal difference (TD) learning of the $Q$-value function~\cite{mnih2015human}, the use of replay buffers %
avoids catastrophic forgetting of previous experiences and improves learning. %
Selecting experiences from the replay buffers using a prioritization strategy (instead of uniformly) can lead to large empirical improvements in terms of sample efficiency~\cite{hessel2017rainbow}. Existing prioritization procedures rely on heuristics, e.g. selecting experiences with high TD error more often~\cite{schaul2015prioritized}. Intuitively, this should minimize the maximum TD error incurred in $Q$-learning. However, such a heuristic could be highly sub-optimal in actor-critic methods, where the goal is to learn the $Q$-function induced by the current policy. 
In this case, it might be more beneficial to prioritize the correction of (potentially small) TD errors on frequently encountered states, instead of focusing on large TD errors on states that might be very infrequent under the current policy.

Based on this intuition, we investigate a new prioritization strategy for actor-critic methods 
based on the likelihood (i.e., the frequency) of experiences under the stationary distribution of the current policy~\cite{tsitsiklis1997an}. 
We derive our approach by analyzing the contractive properties of the Bellman operator under policy-dependent metrics between $Q$-value functions. The typical contraction argument uses the supremum norm:
the maximum difference over state-action pairs will decrease by a constant factor. While appealing, this metric does not account for the frequency of sampling the state-action pairs under the policy. 
Euclidean distances weighted by the stationary distribution address this issue, while allowing the same contraction argument for the Bellman operator. %
We proceed to show that such an argument does not hold for $\ell_2$ distances weighted by any other distribution. Intuitively, optimizing the expected TD-error under the stationary distribution addresses the TD-learning issue in actor-critic methods, as the TD errors in high-frequency states are given more weight.

To implement and combine this prioritization scheme with existing deep reinforcement learning methods, we consider importance sampling over the replay buffer. One approach to achieve this is to estimate the density ratio between the stationary policy distribution and the replay buffer. %
Inspired by recent advances in inverse reinforcement learning~\cite{fu2017learning} and off-policy policy evaluation~\cite{grover2019bias}, we use a likelihood-free method to obtain an estimate of the density ratio from a classifier trained to distinguish different types of experiences. We consider a smaller, ``fast'' replay buffer that contains near on-policy experiences, and a larger, ``slow'' replay buffer that contains additional off-policy experiences, and estimate density ratios between near off-policy and near on-policy distributions. We then use these estimated density ratios as importance weights over the $Q$-value function update objective. This encourages more updates over state-action pairs that are more likely under the stationary policy distribution of the current policy, i.e.,  closer to the fast replay buffer. %

Our approach can be readily combined with existing approaches that learn value functions from replay buffers. We consider our approach over two competitive actor-critic methods, Soft Actor-Critic (SAC,~\cite{haarnoja2018soft}) and Twin Delayed Deep Deterministic policy gradient (TD3,~\cite{fujimoto2018addressing}). We demonstrate the effectiveness of our approach over on a suite of OpenAI gym~\cite{dhariwal2017openai} tasks based on the Mujoco simulator~\cite{todorov2012mujoco}. 
Our method outperforms other approaches to weighted experience replay~\cite{schaul2015prioritized,wang2019boosting} in terms of sample complexity over 4 out of 5 tasks with SAC and all tasks with TD3.

\section{Preliminaries}
The reinforcement learning problem can be described as finding a policy for a Markov decision process (MDP) defined as the following tuple $(\gS, \gA, P, r, \gamma, p_0)$, where $\gS$ is the state space, $\gA$ is the action space, $P: \gS \times \gA \to \gP(\gS)$ is the transition kernel, $r: \gS \times \gA \to \bb{R}$ is the reward function, $\gamma \in [0, 1)$ is the discount factor and $p_0 \in \gP(\gS)$ is the initial state distribution. The goal is to learn a stationary policy $\pi: \gS \to \gP(\gA)$ that selects actions in $\gA$ for each state $s \in \gS$, such that the policy maximizes the expected sum of rewards: $J(\pi) := \E_\pi\left[\sum_{t=0}^\infty \gamma^t r(s_t,a_t)\right]$, where the expectation is over trajectories sampled from $s_0 \sim p_0$, $a_t \sim \pi(\cdot|s_t)$, and $s_{t+1}\sim P(\cdot|s_t, a_t)$ for $t\geq 0$. 

For a fixed policy, the MDP becomes a Markov chain, so we define the state-action distribution at timestep $t$: $d^\pi_t(s, a)$, and the the corresponding (unnormalized) stationary distribution over states and actions $d_\pi(s, a) =  \sum_{t = 0}^{\infty} \gamma^t d^\pi_t(s, a)$ (we assume this always exists for the policies we consider). We can then write $J(\pi) =  \bb{E}_{d^\pi}[r(s, a)]$. For any stationary policy $\pi$, we define its corresponding state-action value function as $Q^\pi(s, a) := \E_\pi[\sum_{t=0}^\infty \gamma^t r(s_t,a_t) | s_0 = s, a_0 = a]$, its corresponding value function as $V^\pi(s) := \E_{a \sim \pi(\cdot | s)}[Q^\pi(s, a)]$ and the advantage function $A^\pi(s, a) = Q^\pi(s, a) - V^\pi(s)$. In actor-critic methods~\cite{konda2000actor}, a stochastic policy parameterized by $\phi$ can be updated via the following:
\begin{align}
    \nabla_\phi J(\pi_\phi) = \bb{E}_{d^\pi}[\nabla_\phi \log \pi_\phi(a | s) \cdot Q^\pi(s, a)]
\end{align}
A large variety of methods have been developed in the context of deep reinforcement learning, including~\cite{silver2014deterministic,mnih2016asynchronous,lillicrap2015continuous,haarnoja2018soft,fujimoto2018addressing}, where learning good approximations to the $Q$-function is critical to the success of any deep reinforcement learning method based on actor-critic paradigms.

The $Q$-function can be learned via temporal difference (TD) learning~\cite{sutton1988learning} based on Bellman equation~\cite{bellman1957markovian} %
$
    Q^\pi(s, a) = \gB^\pi{Q^\pi}(s, a);
$
where $\gB^\pi$ denotes the Bellman evaluation operator
\begin{align}
    \gB^\pi{Q}(s, a) := r(s, a) + \gamma \bb{E}_{s', a'}[Q(s', a')],
\end{align}
where in the expectation we sample the next step $s' \sim P(\cdot | s, a)$ and $a' \sim \pi(\cdot | s)$. 
Given some experience replay buffer $\gD$ (collected by navigating the same environment, but with unknown and potentially different policies), one could optimize the following loss for a $Q$-network:
\begin{align}
    L_Q(\theta; \gD) = \bb{E}_{(s, a) \sim \gD}\left[(Q_\theta(s, a) - \hat{\gB}^\pi Q_\theta(s, a))^2\right]
\end{align}
which fits $Q_\theta(s, a)$ to an estimate of the target value $\hat{\gB}^\pi[Q_\theta](s, a)$. Ideally, we want $\hat{\gB}^\pi = \gB^\pi$, i.e. we obtain infinite experiences. In practice, the target values can be estimated either via on-policy experiences~\cite{sutton1999between} or via off-policy experiences~\cite{precup2000eligibility,munos2016safe}. %

Ideally, one could learn $Q^\pi$ by optimizing the $L_Q(\theta; \gD)$ to zero with over-parametrized neural networks. However, instead of minimizing the loss $L_Q(\theta; \gD)$ directly, prioritization over the sampled replay buffer $\gD$ could lead to stronger performance. For example, prioritized experience replay (PER,~\cite{schaul2015prioritized}) is a heuristic that assigns higher weights to transitions with higher TD errors, and is applied successfully in deep $Q$-learning~\cite{hessel2017rainbow}.

\section{Prioritized Experience Replay based on Stationary Distributions}

Assume that $d$, the distribution the replay buffer $\gD$ is sampled from, is supported on the entire space $\gS \times \gA$, and that we have infinite samples from $\pi$ (so the Bellman target is unbiased)\footnote{We also do not take the gradient over the target, which is the more conventional approach.}. Let us define the TD-learning objective for $Q$ with prioritization weights $w: \gS \times \gA \to \R^{+}$, under the sampling distribution $d \in \gP(\gS \times \gA)$: %
\begin{gather}
  \hspace{-0.7em}  L_Q(\theta; d, w) = \bb{E}_{d}\left[w(s, a) (Q_\theta(s, a) - \gB^\pi Q_\theta(s, a))^2\right] \label{eq:lqdw}
\end{gather}
In practice, the expectation in $L_Q(\theta; d, w)$ can be estimated with Monte-Carlo methods, such as importance sampling, rejection sampling, or combinations of multiple methods (such as in PER~\cite{schaul2015prioritized}). Without loss of generality, we can treat the problem as optimizing the mean squared TD error under some \emph{priority distribution} $d^w \propto d \cdot w$, since:
\begin{gather}
    \argmin_\theta L_Q(\theta; d, w) = \argmin_\theta L_Q(\theta; d^w),
\end{gather}
so one could treat prioritized experience replay for TD learning as selecting a favorable \emph{priority distribution} $d^w$ (under which the $L_Q$ loss is computed) in order to improve some notion of  performance. 

If we assume that $Q_\theta$ is parametrized by neural networks with enough parameters~\cite{cai2019neural}, the choice of \emph{priority distribution} seems to make little difference to the solution, as the neural network can always drive the TD-error to zero. Nevertheless, prior works have empirically achieved practical improvements via prioritization, either in $Q$-learning or actor-critic methods~\cite{wang2019boosting}. 

In this paper, we propose to use as \emph{priority distribution} 
$d^w=d^\pi$, where $d^\pi$ is the stationary distribution of state-action pairs under the current policy $\pi$. This reflects the intuition that TD-errors in high-frequency state-action pairs are more problematic than in low-frequency ones, as they will negatively impact policy updates more severely. 
In the following section, we argue the importance of choosing $d^\pi$ from the perspective of maintaing desirable contractive properties of the Bellman operators under more general norms.
If we consider Euclidean norms weighted under some distribution $d^w \in \gP(\gS \times \gA)$, the usual $\gamma$-contraction argument for Bellman operators holds only for $d^w = d^\pi$, and not for other distributions.

\subsection{Policy-dependent Norms for Bellman Backup}

The convergence of Bellman 
updates 
relies on the fact that the Bellman evaluation operator $\gB^\pi$ is a $\gamma$-contraction with respect to the $\ell_\infty$ norm, i.e. $\forall Q, Q' \in \gQ$, where $\gQ = \{Q: (\gS \times \gA) \to \R\}$ is the set of all possible $Q$ functions:
\begin{align}
   \norm{\gB^\pi Q - \gB^\pi Q'}_\infty \leq \gamma \norm{Q - Q'}_\infty
\end{align}

While it is sufficient to show convergence results, the $\ell_\infty$ norm reflects a distance over two $Q$ functions under the worst possible state-action pair, and is independent of the current policy. 
If two $Q$ functions are equal everywhere except for a large difference on a single state-action pair $(\widetilde{s}, \widetilde{a})$ that is unlikely under $d^\pi$, the $\ell_\infty$ distance between the two $Q$ functions is large. 
In practice, however, this will have little effect over policy updates as it is unlikely for the current policy to sample $(\widetilde{s}, \widetilde{a})$. 

Since our goal with the TD updates is to learn $Q^\pi$, a distance metric that is related to $\pi$ is a more suitable one for comparing different $Q$ functions, reflecting the intuition that errors in frequent state-action pairs are more costly than on infrequent ones. Let us consider the following weighted $\ell_2$ distance between $Q$ functions, 
\begin{align}
    \norm{Q - Q'}^2_d := \bb{E}_{(s, a) \sim d}[(Q(s, a) - Q'(s, a))^2]
\end{align}
where $d \in \gP(\gS \times \gA)$ is a distribution over state-action pairs. This can be treated as the $\ell_2$ norm but measured over stationary distribution $d$ as opposed to the Lebesgue measure. This is closely tied to the $L_Q$ objective since
\begin{gather*}
    L_Q(\theta; d) = \norm{Q_\theta(s, a) - \gB^\pi Q_\theta(s, a)}_d^2
\end{gather*}
In the following theorem, we show that $\gB^\pi$ is only a contraction operator when under the $\norm{\cdot}_{d^\pi}$ norm; this supports the use of $d^\pi$ instead of other distributions for the $L_Q$ objective, as it reflects a more reasonable measurement of distance between $Q$-functions for policy $\pi$.
\begin{restatable}{theorem}{contraction}
\label{thm:contract-new-norm}
The Bellman operator $\gB^\pi$ is a $\gamma$-contraction with respect to the $\norm{\cdot}_{d}$ norm if and only if $d = d^\pi$ holds almost everywhere, i.e.,
\begin{align*}
    & \norm{\gB^\pi Q - \gB^\pi Q'}_d \leq \gamma \norm{Q - Q'}_d, \forall Q, Q' \in \gQ \iff d = d^\pi, \quad a.e.
\end{align*}
\end{restatable}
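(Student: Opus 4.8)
The plan is to reduce the two-sided claim to a statement about a single linear operator and then analyze it via a positive-semidefinite quadratic form. First I would note that the reward term cancels in the difference: writing $\Delta = Q - Q'$ and letting $P^\pi$ denote the policy-induced transition operator $(P^\pi f)(s,a) = \bb{E}_{s' \sim P(\cdot|s,a),\, a' \sim \pi(\cdot|s')}[f(s',a')]$, one has $\gB^\pi Q - \gB^\pi Q' = \gamma\, P^\pi \Delta$. Hence the desired $\gamma$-contraction in $\norm{\cdot}_d$ is exactly equivalent to $P^\pi$ being a non-expansion, i.e. $\norm{P^\pi \Delta}_d \le \norm{\Delta}_d$ for every $\Delta$. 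Since $Q,Q'$ range over all of $\gQ$, so does $\Delta$, and the whole theorem becomes: $P^\pi$ is a non-expansion on $L^2(d)$ if and only if $d = d^\pi$ almost everywhere.

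For the ``if'' direction I would invoke the invariance of $d^\pi$ under a single transition step, $d^\pi P^\pi = d^\pi$. Applying the conditional Jensen inequality pointwise gives $(P^\pi \Delta)(s,a)^2 = (\bb{E}[\Delta(s',a') \mid s,a])^2 \le \bb{E}[\Delta(s',a')^2 \mid s,a]$; integrating against $d^\pi$ and re-indexing the outer expectation by invariance yields $\norm{P^\pi \Delta}_{d^\pi}^2 \le \bb{E}_{d^\pi}[\Delta^2] = \norm{\Delta}_{d^\pi}^2$. This half is routine.

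The ``only if'' direction is the crux. I would exploit two structural facts: the map $\Delta \mapsto \norm{\Delta}_d^2 - \norm{P^\pi\Delta}_d^2$ is a non-negative quadratic form (this is precisely the non-expansion hypothesis), and $P^\pi$ fixes the constant function, $P^\pi \mathbf{1} = \mathbf{1}$, because it is stochastic. Define the symmetric bilinear form $B(f,g) = \langle f, g\rangle_d - \langle P^\pi f, P^\pi g\rangle_d$, which is PSD by hypothesis and satisfies $B(\mathbf{1},\mathbf{1}) = 0$. The Cauchy--Schwarz inequality for PSD forms, $|B(\mathbf{1},g)| \le \sqrt{B(\mathbf{1},\mathbf{1})}\,\sqrt{B(g,g)}$, then forces $B(\mathbf{1},g) = 0$ for every $g$. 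Using $P^\pi \mathbf{1} = \mathbf{1}$, this reads $\bb{E}_d[g] = \bb{E}_d[P^\pi g] = \bb{E}_{(s',a')\sim d P^\pi}[g]$ for all test functions $g$, so the pushforward $dP^\pi$ and $d$ integrate every $g$ to the same value, giving $dP^\pi = d$. Thus $d$ is invariant under $P^\pi$, and uniqueness of the stationary distribution (from the standing existence assumption) identifies it with $d^\pi$ up to a null set.

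The main obstacle is exactly this last step: converting a quadratic inequality required to hold for all $\Delta$ into the single linear identity $dP^\pi = d$. The clean device is the PSD Cauchy--Schwarz argument above, the infinite-dimensional analogue of ``a PSD matrix $M$ with $\mathbf{1}^\top M \mathbf{1} = 0$ must satisfy $M\mathbf{1}=0$,'' using the constant function as the distinguished null vector. Care is needed to check that $\mathbf{1} \in L^2(d)$ (automatic since $d$ is a probability measure, so $\norm{\mathbf{1}}_d = 1$) and that the class of test functions $g$ is rich enough to conclude equality of the two measures; with these in place the remainder is bookkeeping with inner products.
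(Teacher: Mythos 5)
Your proof is correct, and while your ``if'' direction coincides with the paper's (Jensen's inequality for the conditional expectation, plus invariance of $d^\pi$ under the one-step operator $P^\pi$), your ``only if'' direction takes a genuinely different route. The paper argues by contradiction via a calculus-of-variations perturbation: it sets $Q'=0$, takes functional derivatives of the two sides of the contraction inequality, evaluates them at a constant function $Q_0 \equiv q$ (where the inequality holds with equality), and perturbs by $\nu\,\mathbb{I}((s,a)\in\Gamma)$ on an open set $\Gamma$ where the pushforward $d' = dP^\pi$ exceeds $d$, showing the inequality fails at first order in $\nu$. Your argument packages the same underlying mechanism --- the constant function is a null direction of the defect form because $P^\pi \mathbf{1} = \mathbf{1}$ --- into the Cauchy--Schwarz inequality for the PSD bilinear form $B(f,g) = \langle f,g\rangle_d - \langle P^\pi f, P^\pi g\rangle_d$, which yields the exact identity $\bb{E}_d[g] = \bb{E}_{dP^\pi}[g]$ for all bounded test functions and hence $dP^\pi = d$ directly, rather than a contradiction. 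Indeed, the paper's first-order term $2q\nu\int_\Gamma(d'-d)$ is precisely $-2B(Q_0,\epsilon)$, so your Cauchy--Schwarz step is the rigorous closed form of their infinitesimal expansion; what your version buys is the elimination of the $o(\nu)$ bookkeeping and of any topological assumptions (no open set $\Gamma$ needed, only measurability), at the cost of being slightly less constructive (the paper exhibits an explicit witness $Q_0+\epsilon$ violating the contraction). Two shared caveats, neither of which counts against you relative to the paper: both proofs read $d^\pi$ as the invariant distribution of the chain satisfying $d^\pi P^\pi = d^\pi$ (the discounted occupancy measure defined in the preliminaries does not satisfy this), and both conclude from $dP^\pi = d$ that $d = d^\pi$ a.e., which requires uniqueness of the invariant distribution --- an assumption the paper invokes implicitly in the line ``$d=d' \iff d = d^\pi$'' and that you at least flag explicitly, though note that the paper's standing assumption is existence, which does not by itself give uniqueness.
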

\begin{proof}
In Appendix~\ref{app:proof}.

\end{proof}

\subsection{TD Learning based on $d^\pi$}

Theorem~\ref{thm:contract-new-norm} highlights the importance of using $d^\pi$ in the $\norm{\cdot}_d$ norm specifically for measuring the distance between $Q$-functions: if we use any distribution other than $d^\pi$, the Bellman operator is not guaranteed to be a $\gamma$-contraction under that distance, which leads to worse convergence rates. 
The $\norm{\cdot}_{d^\pi}$ norm also captures our intuition that errors in high-frequency state-action pairs are more problematic than low-frequency ones, as they are likely to have larger effect in policy learning. For example, for the actor-critic policy gradient with $Q_\theta$:
\begin{align}
    \nabla_\phi J(\pi_\phi) = \bb{E}_{d^\pi}[\nabla_\phi \log \pi_\phi(a | s) Q_\theta(s, a)]
\end{align}
if $(Q_\theta(s, a) - Q^\pi(s, a))^2$ is large for high-frequency $(s, a)$ tuples, then the policy update is likely to be worse than the update with ground truth $Q^\pi$.
Moreover, the gradient descent update over the objective $L_Q(\theta; d^\pi)$:
\begin{align}
    \theta & \leftarrow \theta - \eta \nabla_\theta L_Q(\theta; d^\pi)  = \theta - \eta \bb{E}_{d^\pi}[(Q_\theta(s, a) - \hat{\gB}^\pi Q_\theta(s, a)) \nabla_\theta Q_\theta(s, a)] \nonumber
\end{align}
corresponds to a batch version of TD update.
This places more emphasis on TD errors for state-action pairs that occur more frequently under the current policy.

To illustrate the validity of using $d^\pi$, we consider a chain MDP example (Figure~\ref{fig:chain-mdp-figure}, but with 5 states in total), where the agent takes two actions that progress to the state on the left or on the right. The agent receives a final reward of 1 at the right-most state and rewards of 0 at other states. The policy takes the right action at each state with probability $p$, and takes the left action for with probability $1-p$. We initialize the $Q$-function from $[0, 1]$ uniformly at random and consider $p = 0.8$ and $0.2$.

We compare three approaches to prioritization with TD updates: uniform over all state-action pairs, prioritization over TD error (as done in~\cite{schaul2015prioritized}), and prioritization with $d^\pi$; we include more details in Appendix~\ref{app:exp}. We illustrate the $\norm{\cdot}_{d^\pi}^2$ distance between the learned $Q$-function and the ground truth $Q$-function in Figure~\ref{fig:chain-mdp-results}; prioritization with $d^\pi$ outperforms both uniform sampling and prioritization with TD error in terms of speed of converging to the ground truth, especially at the initial iterations. When $p = 0.8$, $d^\pi$ only takes 120 steps on average to decrease the expected error to be smaller than 1, while TD error takes 182 steps on average; this means that prioritization with $d^\pi$ is helpful when we have a limited update budget.

\begin{figure}
\centering
\begin{subfigure}{0.3\textwidth}
\centering
\resizebox{\textwidth}{!}{
\begin{tikzpicture}[auto,node distance=8mm,>=latex,font=\small]
    \tikzstyle{round}=[thick,draw=black,circle]

    \node[round] (s0) {$s_0$};
    \node[round,right=10mm of s0] (s1) {$s_1$};
    \node[round,right=10mm of s1] (s2) {$s_2$};

    \draw[color=brown,bend left,->] (s0) to node [auto] {$a_0$} (s1);
    \draw[color=brown,bend left,->] (s1) to node [auto] {$a_0$} (s2);
    
    \draw[color=teal,bend left,->] (s1) to node [auto] {$a_1$} (s0);
    \draw[color=teal,bend left,->] (s2) to node [auto] {$a_1$} (s1);
    
\end{tikzpicture}
}
\caption{Illustration of the chain MDP. %
}
\label{fig:chain-mdp-figure}
\end{subfigure}
~
\begin{subfigure}{0.6\textwidth}
    \centering
    \includegraphics[width=1\textwidth]{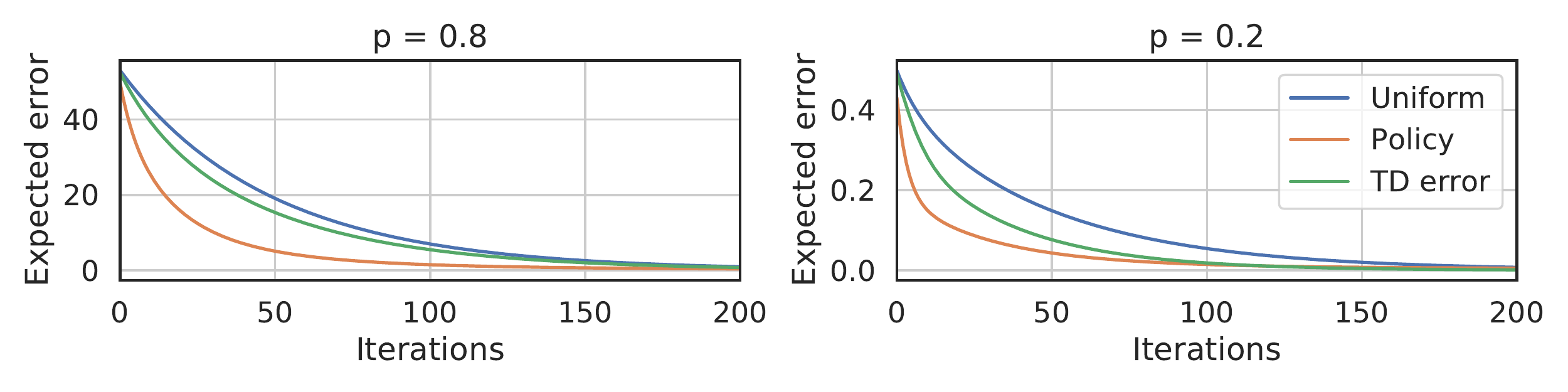}
    \caption{Estimation error ($\bb{E}_{d^\pi}[(Q_\theta - Q^\pi)^2]$) for different prioritization methods, including uniform sampling (Uniform), sampling based on TD error (TD error), and sampling based on $d^\pi$ (Policy).}
    \label{fig:chain-mdp-results}
\end{subfigure}
\caption{Simulation of TD updates with different prioritization methods.}
\end{figure}

\section{Likelihood-free Importance Weighting over Replay Buffers}
In practice, however, there are two challenges with regards to using $L_Q(\theta; d^\pi)$ as the objective. On the one hand, an accurate estimate of $d^\pi$ requires many on-policy samples from $d^\pi$ and interactions with the environment, which could increase the practical sample complexity; on the other hand, if we instead use off-policy experiences (from the replay buffer), it would be difficult to estimate the importance ratio $w(s, a) := d^\pi(s, a) / d^D(s, a)$ when the replay buffer $\gD$ is a mixture of trajectories from different policies. Therefore, likelihood-free density ratio estimation methods that rely only on samples (e.g. from the replay buffer) rather than likelihoods are more general and well-suited for estimating the objective function $L_Q(\theta; d^\pi)$.

In this paper, we consider using the variational representation of $f$-divergences~\cite{csiszar1964eine} to estimate the density ratios. For any convex, lower-semicontinuous function $f: [0, \infty) \to \R$ satisfying $f(1) = 0$, the $f$-divergence between two probabilistic measures $P, Q \in \gP(\gX)$ (where we assume $P \ll Q$, i.e. $P$ is absolutely continuous w.r.t. $Q$) is defined as:
$
    D_{f}(P \Vert Q)
     = \int_\gX f\left(\diff P(\vx) / \diff Q(\vx)\right) \diff Q(\vx) \label{eq:fdiv}
$. 
A general variational method can be used to estimate $f$-divergences given only samples from $P$ and $Q$.
\begin{lemma}[\cite{nguyen2008estimating}]
\label{thm:nwj} Assume that $f$ has first order derivatives $f'$ at $[0, +\infty)$. 
$\forall P, Q \in \gP(\gX)$ such that $P \ll Q$ and $w: \gX \to \R^{+}$,
\begin{gather}
    D_{f}(P \Vert Q) \geq \bb{E}_P[f'(w(\vx))] - \bb{E}_Q[f^{*}(f'(w(\vx)))]
\end{gather}
where $f^{*}$ denotes the convex conjugate and the equality is achieved when 
$w = \diff P / \diff Q %
$.
\end{lemma}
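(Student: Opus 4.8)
The plan is to derive this as a direct consequence of the Fenchel--Young inequality applied pointwise to the density ratio, followed by a change of measure. First I would introduce the density ratio $r(\vx) := \diff P(\vx) / \diff Q(\vx)$, which exists and is $Q$-integrable because $P \ll Q$, and rewrite the divergence in its defining form as an expectation under $Q$, namely $D_f(P \Vert Q) = \bb{E}_Q[f(r(\vx))]$. This merely substitutes $\diff P / \diff Q$ into the integral definition of the $f$-divergence given above.

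The core step uses the convex conjugate. Since $f$ is convex and lower-semicontinuous on $[0, \infty)$, its conjugate satisfies $f^{*}(t) = \sup_{u \geq 0}(u t - f(u))$, so the Fenchel--Young inequality $f(u) \geq u t - f^{*}(t)$ holds for every $u \geq 0$ and every $t$ in the domain of $f^{*}$. I would apply this pointwise with $u = r(\vx)$ and the specific choice $t = f'(w(\vx))$, obtaining $f(r(\vx)) \geq r(\vx)\, f'(w(\vx)) - f^{*}(f'(w(\vx)))$ for $Q$-almost every $\vx$. Taking expectations under $Q$ and splitting the two terms, the second term is already $\bb{E}_Q[f^{*}(f'(w(\vx)))]$; for the first term I would invoke the change-of-measure identity $\bb{E}_Q[r(\vx)\, g(\vx)] = \bb{E}_P[g(\vx)]$, valid for any measurable $g$ since $r = \diff P / \diff Q$, which converts $\bb{E}_Q[r(\vx) f'(w(\vx))]$ into $\bb{E}_P[f'(w(\vx))]$. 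Combining the two pieces yields the claimed lower bound.

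To establish the equality claim, I would use the tightness condition of Fenchel--Young: $f(u) = u t - f^{*}(t)$ holds precisely when $t \in \partial f(u)$, i.e. $t = f'(u)$ for differentiable $f$. Setting $w = r = \diff P / \diff Q$ gives $t = f'(r(\vx))$, so the first-order optimality condition $f^{*}(f'(r)) = r f'(r) - f(r)$ is met at $u = r$, making the pointwise inequality an exact equality and hence the integrated bound tight.

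The main obstacle I anticipate is the equality direction rather than the inequality: the inequality is essentially a one-line application of Fenchel--Young combined with the reweighting identity, whereas the equality requires verifying that $w = \diff P / \diff Q$ actually attains the pointwise supremum defining $f^{*}$. This is clean when $f$ is strictly convex (so $f'$ is injective and the maximizer is unique), but for general convex $f$ one can only assert that $w = \diff P / \diff Q$ \emph{achieves} equality, not that it is the unique such $w$; I would state the conclusion in this weaker form, which is exactly what the lemma claims. A secondary technical point is integrability, namely ensuring $\bb{E}_P[f'(w)]$ and $\bb{E}_Q[f^{*}(f'(w))]$ are individually well-defined and do not produce an $\infty - \infty$ situation, which I would handle by restricting attention to $w$ for which both expectations are finite, the regime relevant for density-ratio estimation.
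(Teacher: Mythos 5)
Your proof is correct, and in fact the paper offers no proof of this lemma at all (it is quoted directly from \cite{nguyen2008estimating}); your argument --- the Fenchel--Young inequality $f(u) \geq ut - f^{*}(t)$ applied pointwise with $u = \diff P/\diff Q$ and $t = f'(w(\vx))$, followed by the change of measure $\bb{E}_Q[(\diff P/\diff Q)\, g] = \bb{E}_P[g]$, with tightness at $t \in \partial f(u)$ yielding equality at $w = \diff P/\diff Q$ --- is precisely the standard convex-duality derivation in that cited reference. Your two caveats are also handled correctly: equality is only asserted to be \emph{achieved} (uniqueness would need strict convexity, which the lemma does not claim), and restricting to $w$ with both expectations finite is the right way to rule out an $\infty - \infty$ pathology.
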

We can apply this approach to estimating the density ratio $w(s, a) := d^\pi(s, a) / d^D(s, a)$ with samples from the replay buffer. These ratios are then multiplied to the $Q$-function updates to perform importance weighting (we do not do re-sampling as is done in PER). Specifically, we consider sampling from two types of replay buffers. One is the \textit{regular (slow) replay buffer}, which contains a mixture of trajectories from different policies; the other is a \textit{smaller (fast) replay buffer}, which contains only a small set of trajectories from very recent policies. After each episode of environment interaction, we update both replay buffers with the new experiences; the distribution of the slow replay buffer changes more slowly due to the larger size (hence the name ``slow'').

The slow replay buffer contains off-policy samples from $d^D$ whereas the fast replay buffer contains (approximately) on-policy samples from $d^{\pi}$ (assuming the buffer size is small enough). Therefore, the slow replay buffer has better coverage of transition dynamics of the environment while being less on-policy. 
Denoting the fast and slow replay buffers as $\gD_{\mathrm{f}}$ and $\gD_{\mathrm{s}}$ respectively, we estimate the ratio $d^\pi / d^D$ via minimizing the following objective over the network $w_\psi(\vx)$ parametrized by $\psi$ (the outputs $w_\psi(s, a)$ are forced to be non-negative via activation functions):
\begin{align}
    L_w(\psi) := \bb{E}_{\gD_{\mathrm{s}}}[f^{*}(f'(w_\psi(s, a)))] - \bb{E}_{\gD_{\mathrm{f}}}[f'(w_\psi(s, a))] \label{eq:w-train-obj}
\end{align}
From Lemma~\ref{thm:nwj}, we can recover an estimate of the density ratio from the optimal $w_\psi$ by minimizing the $L_w(\psi)$ objective. To address the finite sample size issue, we apply self-normalization~\cite{cochran2007sampling} to the importance weights over the slow replay buffer $\gD_{\mathrm{s}}$ with a temperature hyperparameter $T$:
\begin{align}
    \tilde{w}_\psi(s, a) := \frac{w_\psi(s, a)^{1/T}}{\bb{E}_{\gD_{\mathrm{s}}}[w_\psi(s, a)^{1/T}]} \label{eq:w-tilde}
\end{align}
The final objective for TD learning over $Q$ is then
\begin{align}
 L_Q(\theta; d^\pi) \approx L_Q(\theta; \gD_{\mathrm{s}}, \tilde{w}_\psi)  := \bb{E}_{(s, a) \sim \gD_{\mathrm{s}}}[\tilde{w}_\psi(\vx) (Q_\theta(s, a) - \hat{\gB}^\pi Q_\theta(s, a))^2] \label{eq:final-q-obj}
\end{align}
where the target $\hat{\gB}^\pi Q_\theta$ is estimated via Monte Carlo samples. We keep the remainder of the algorithm, such as policy gradient and value network update (if available) unchanged, so this method can be adapted for different off-policy actor-critic algorithms, utilizing their respective advantages. We observe that using the weights to correct the policy updates does not demonstrate provide much marginal improvements, so we did not consider this for comparison.

We describe a general procedure of our approach in Algorithm~\ref{alg:ac-lfiw} (Appendix~\ref{app:alg}), where one can modify from some ``base'' actor-critic algorithm to implement our approach. In the experiments, we consider two popular actor-critic algorithms, Soft Actor-Critic (SAC, ~\cite{haarnoja2018soft}) and Twin Delayed Deep Deterministic policy gradient (TD3,~\cite{fujimoto2018addressing}); these algorithm cover both stochastic and deterministic policies, as our method does not require likelihood estimates from the policy.

\section{Related Work}

Experience replay~\cite{lin1992self} is a crucial component in deep reinforcement learning~\cite{hessel2017rainbow,andrychowicz2017hindsight,schaul2015prioritized}, where off-policy experiences are utilized to improve sample efficiency. These experiences can be utilized on policy updates (such as in actor-critic methods~\cite{konda2000actor,wang2016sample}), on value updates (such as in deep Q-learning~\cite{schaul2015prioritized}) or on evaluating TD update targets~\cite{precup2000eligibility,precup2001off}.

For value updates, there are two sources of randomness that could benefit from importance weights (prioritization). The first source is the evaluation of the TD learning target for longer traces such as TD($\lambda$); importance weights can be used to debias targets computed from off-policy trajectories~\cite{precup2000eligibility,munos2016safe,espeholt2018impala,schmitt2019off}, similar to its role in policy learning. The second source is the sampling of state-action pairs where the values are updated~\cite{schaul2015prioritized}, which is addressed in this paper. 

Numerous techniques have achieved superior sample complexity through prioritization of replay buffers. In model-based planning, Prioritized Sweeping~\cite{moore1993prioritized,andre1998generalized,seijen2013planning} selects the next state updates according to changes in value. 
Prioritized Experience Replay (PER,~\cite{schaul2015prioritized}) emphasizes experiences with larger TD errors and is critical to the success of sample efficient deep Q-learning~\cite{hessel2017rainbow}. Remember and Forget Experience Replay (ReF-ER,~\cite{novati2018remember}) removes the experiences if it differs much from choices of the current policy; this encourages sampling on-policy behavior which is similar to what we propose. Differing from ReF-ER, we do not require knowledge of the policy distribution. %

Convergence of TD learning under the stationary distribution of the policy has been analyzed in the context of function approximation~\cite{van1998learning,cai2019neural}. Our paper provides a practical approach to implement this in the context of deep reinforcement learning. Likelihood-free density ratio estimation  have been adopted in imitation learning (IL,~\cite{ho2016generative}), inverse reinforcement learning (IRL,~\cite{fu2017learning}) and model-based off-policy policy evaluation (OPE,~\cite{grover2019bias}). Different from these cases, we do not use the weights to estimate the advantage function or to reduce bias in reward estimation, but to improve performance of TD learning with function approximation.

\section{Experiments}
\label{sec:exp}

\begin{figure*}
\centering
\begin{subfigure}{0.27\textwidth}
\includegraphics[width=\textwidth]{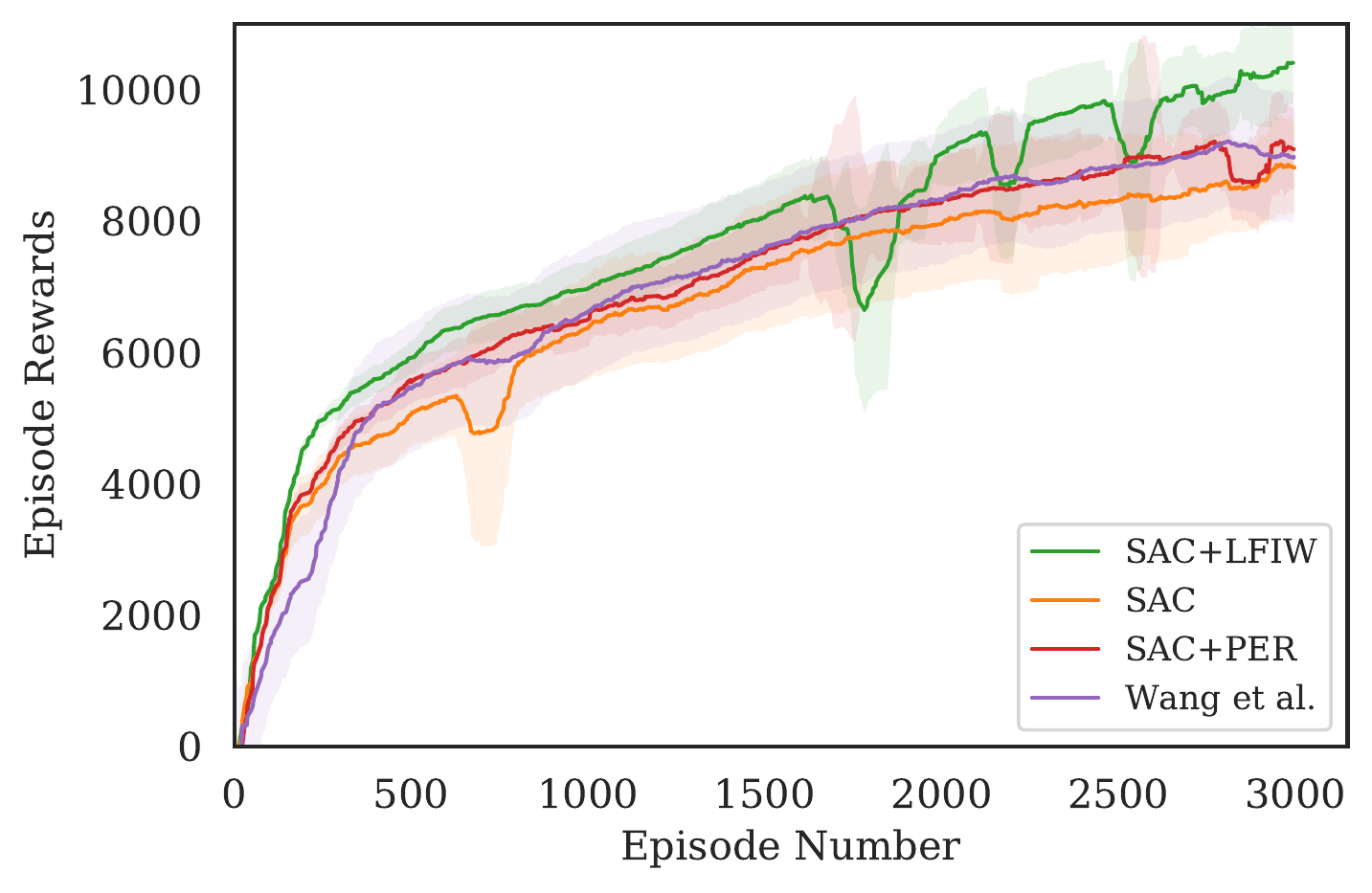}
\caption{HalfCheetah-v2}
\end{subfigure}
~
\begin{subfigure}{0.27\textwidth}
\includegraphics[width=\textwidth]{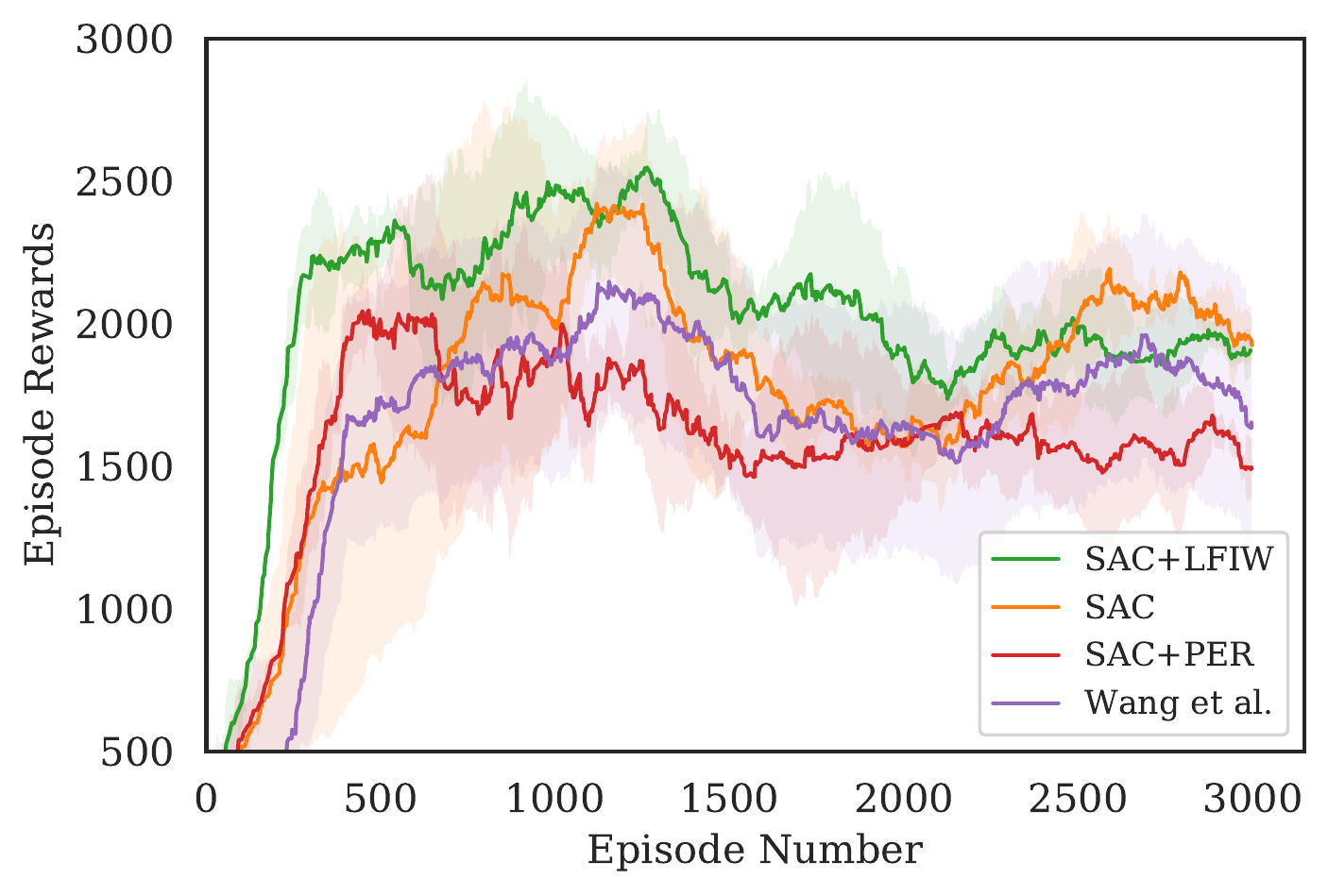}
\caption{Hopper-v2}
\end{subfigure}
~
\begin{subfigure}{0.27\textwidth}
\includegraphics[width=\textwidth]{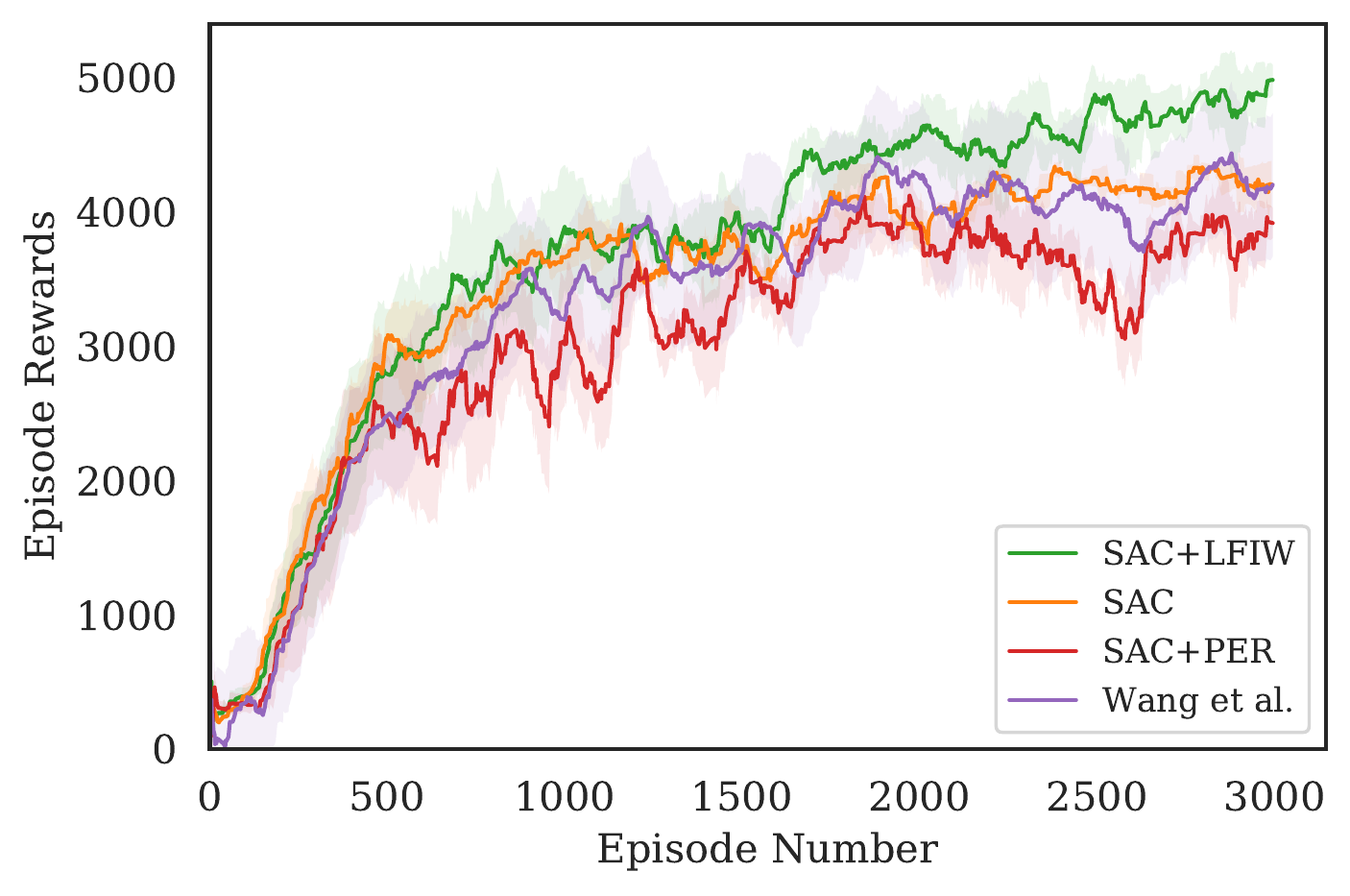}
\caption{Walker2d-v2}
\end{subfigure}
~
\begin{subfigure}{0.27\textwidth}
\includegraphics[width=\textwidth]{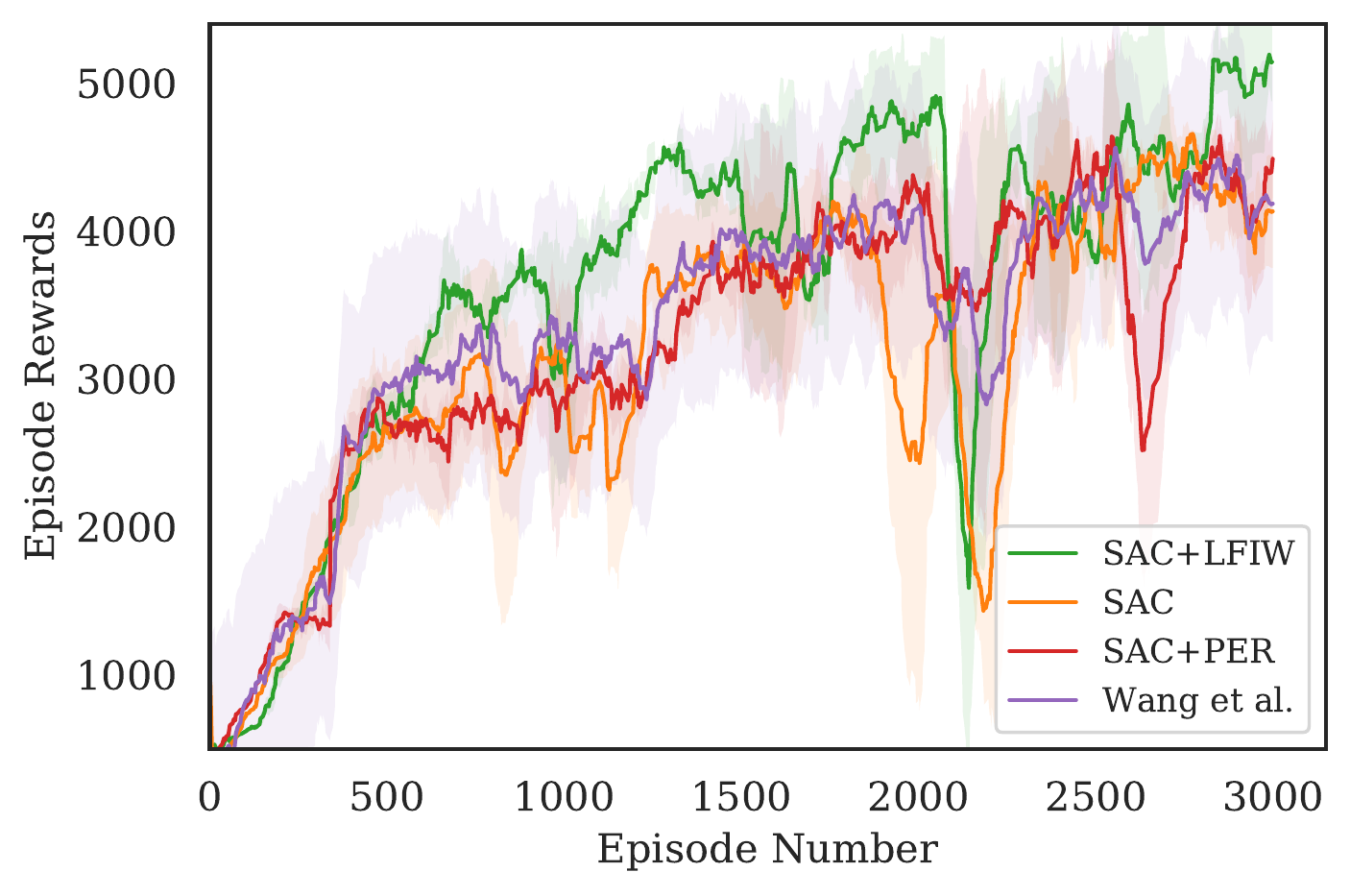}
\caption{Ant-v2}
\end{subfigure}
~
\begin{subfigure}{0.27\textwidth}
\includegraphics[width=\textwidth]{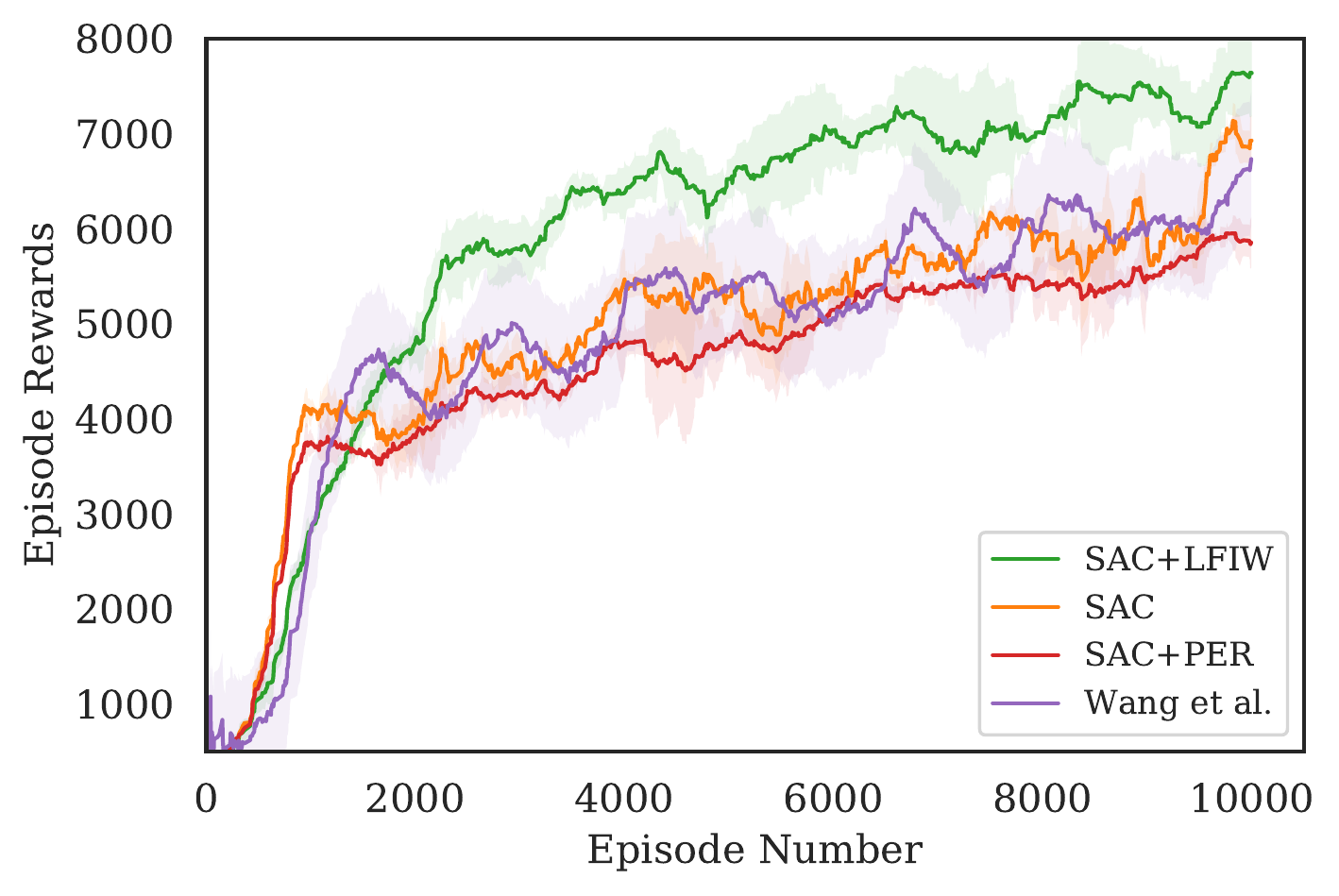}
\caption{Humanoid-v2}
\end{subfigure}
    \caption{Learning curvers for the OpenAI gym continuous control tasks using SAC~\cite{haarnoja2018soft}. The shaded region represents the standard deviation of the average evaluation over 5 trials.}
    \label{fig:sac-results}
\end{figure*}

\begin{figure*}
\centering
\begin{subfigure}{0.23\textwidth}
\includegraphics[width=\textwidth]{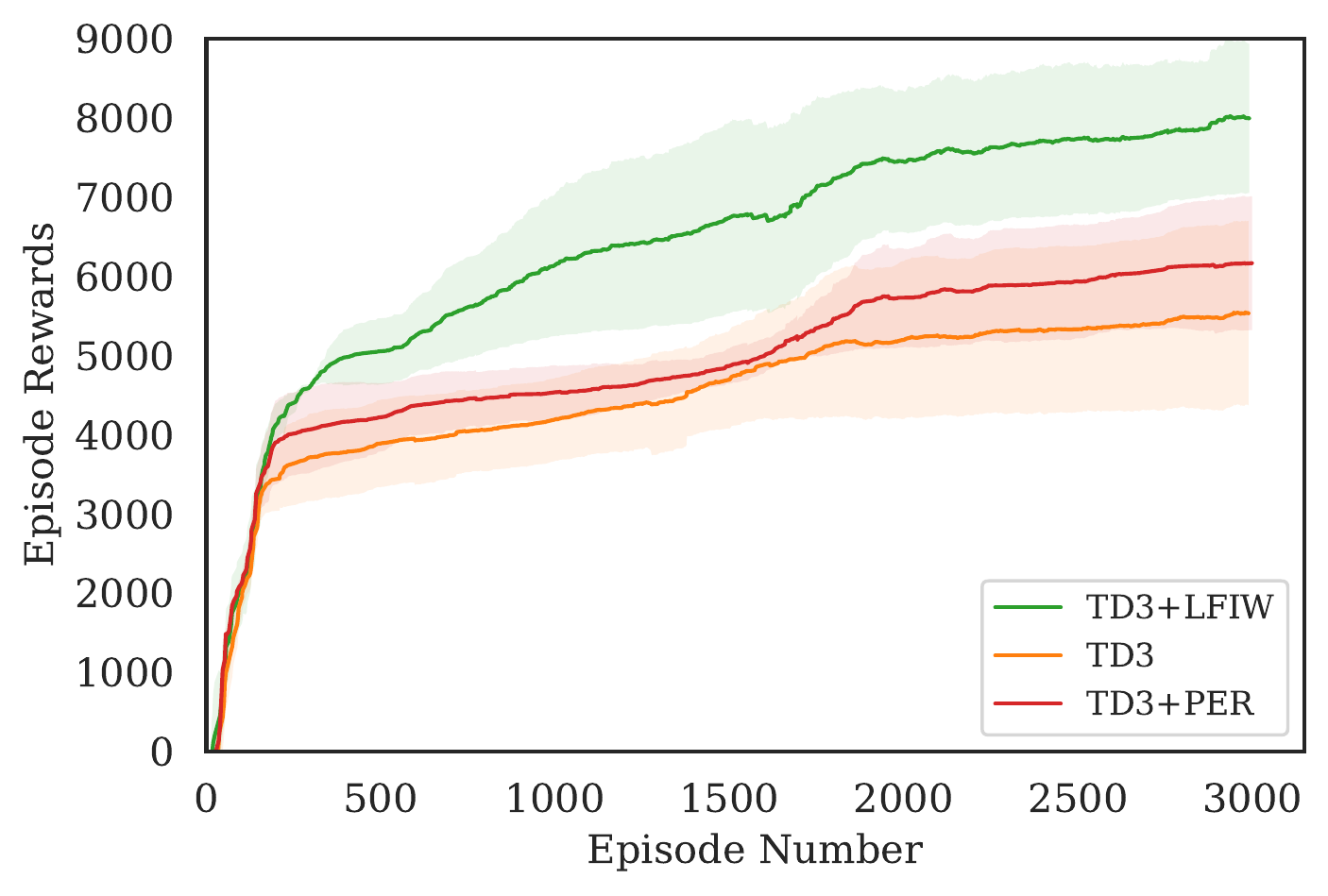}
\caption{HalfCheetah-v2}
\end{subfigure}
~
\begin{subfigure}{0.23\textwidth}
\includegraphics[width=\textwidth]{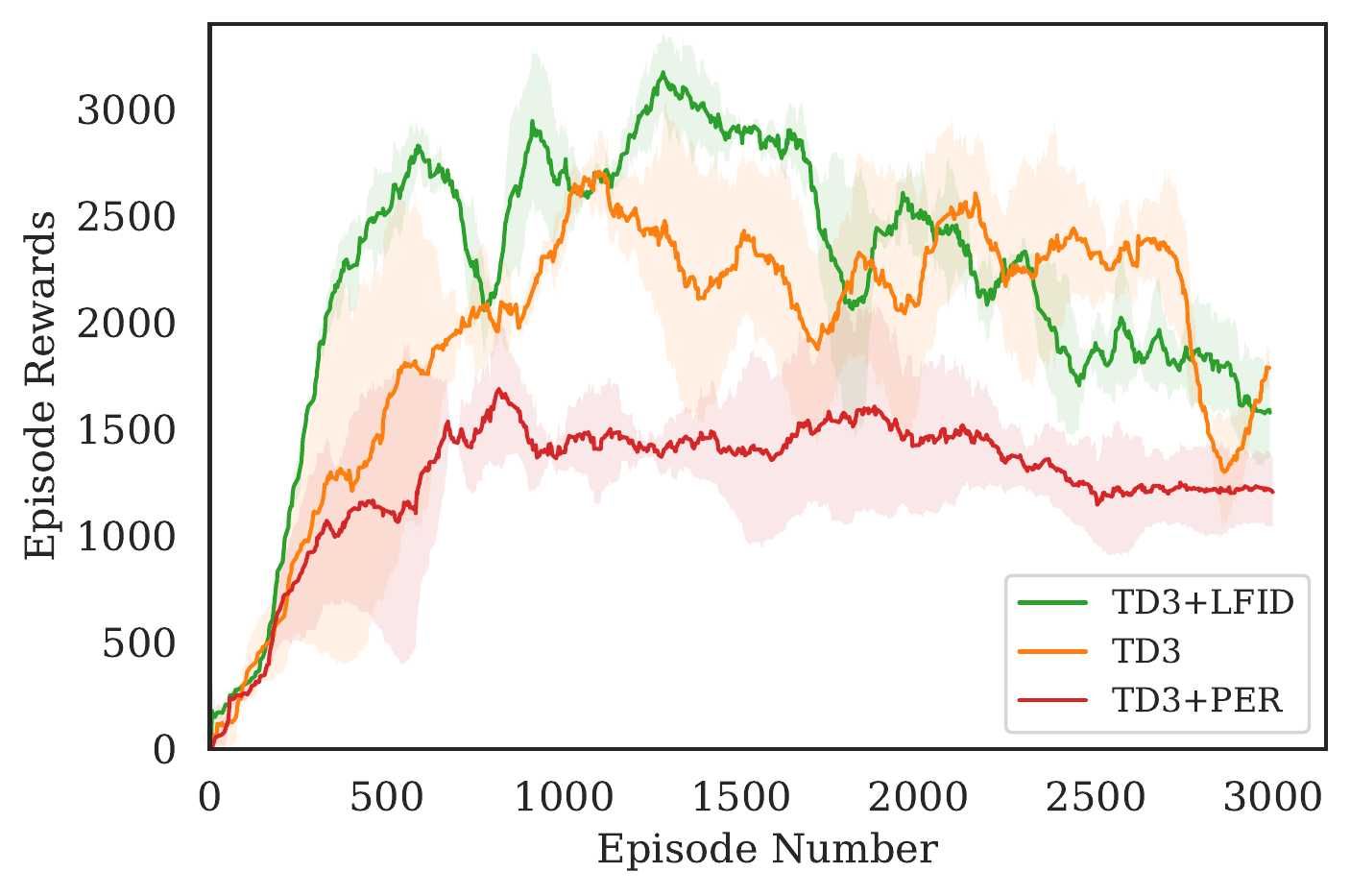}
\caption{Hopper-v2}
\end{subfigure}
~
\begin{subfigure}{0.23\textwidth}
\includegraphics[width=\textwidth]{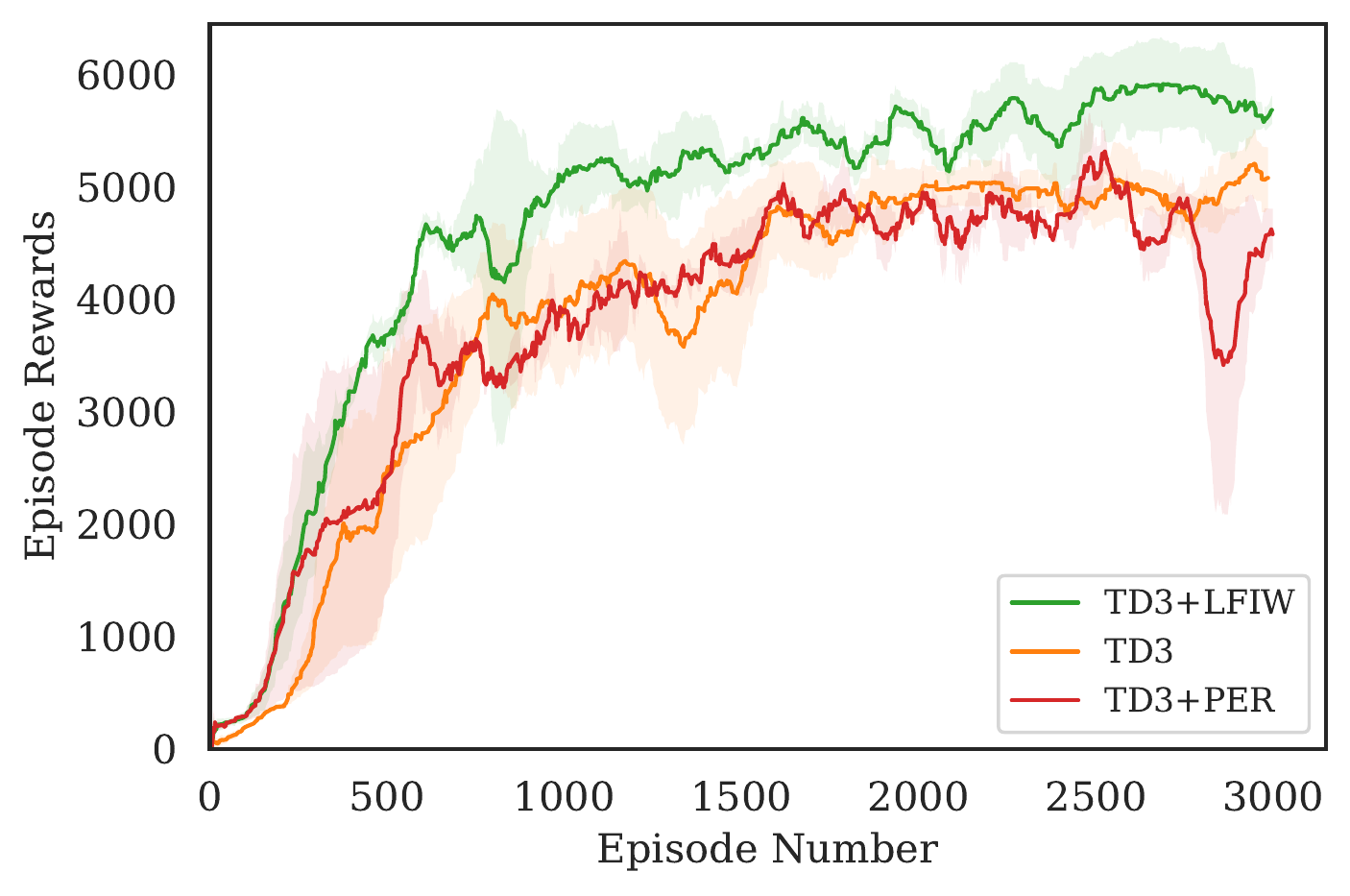}
\caption{Walker2d-v2}
\end{subfigure}
~
\begin{subfigure}{0.23\textwidth}
\includegraphics[width=\textwidth]{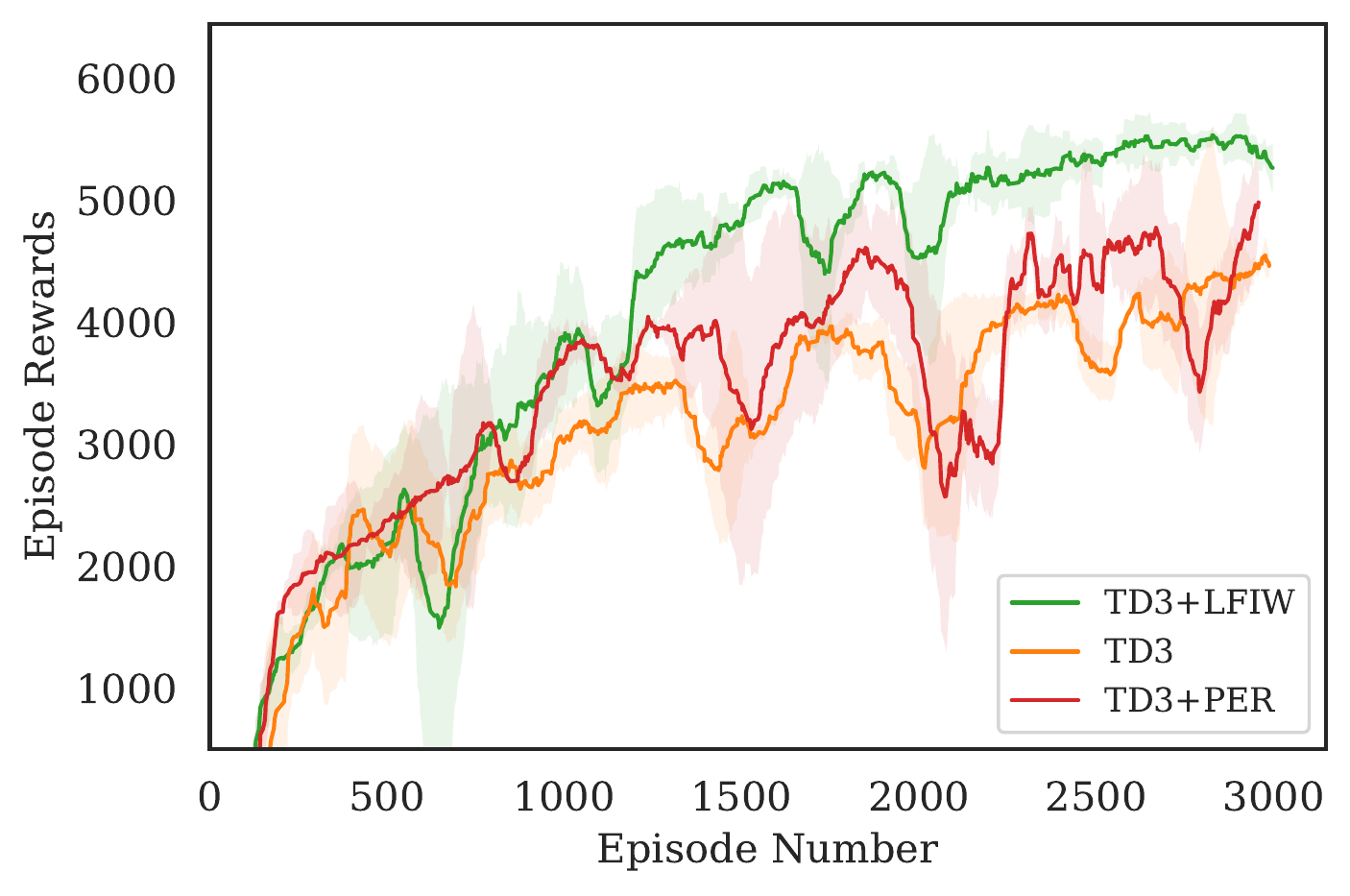}
\caption{Ant-v2}
\end{subfigure}
    \caption{Learning curvers for the OpenAI gym continuous control tasks using TD3~\cite{fujimoto2018addressing}. The shaded region represents the standard deviation of the average evaluation over 5 trials. We did not include {Humanoid} as the original TD3 algorithm fails to learn successfully.}
    \label{fig:td3-results}
\end{figure*}

We combine the proposed prioritization approach over two popular actor-critic algorithms, namely Soft-Actor Critic (SAC,~\cite{haarnoja2018soft}) and Twin Delayed Deep Deterministic policy gradient (TD3,~\cite{fujimoto2018addressing}). We compare our method with alternative approaches to prioritization; these include uniform sampling over the replay buffer (adopted by the original SAC and TD3 methods) and prioritization experience replay based on TD-error~\cite{schaul2015prioritized}. We evaluate over a range of Mujoco~\cite{todorov2012mujoco} continuous control tasks from the OpenAI gym benchmark~\cite{brockman2016openai}, including {HalfCheetah}, {Walker}, {Ant}, {Hopper} and {Humanoid} (-v2). Our implementations are all based on RLkit\footnote{\href{https://github.com/vitchyr/rlkit}{https://github.com/vitchyr/rlkit}} which gives a fair comparison between the baselines and our approach given the difficulty to reproduce the best RL results~\cite{pineau2020improving,engstrom2020implementation}. %

Our method introduces additional hyperparameters compared to the vanilla approaches, namely the temperature $T$, the size of the fast replay buffer $|\gD_{\mathrm{f}}|$ and the architecture for the density estimator $w_\psi$. To ensure fair comparisons against the baselines, we use the same hyperparameters as the original algorithms when it is available. %
For all environments we use the following default hyperparameters for likelihood-free importance weighting: $T = 5$, $|\gD_{\mathrm{f}}| = 10^4$, $|\gD_{\mathrm{s}}| = 10^6$, $w_\psi$ is parameterized using a two-layer fully connected network with $256$ neurons in each layer and ReLU activations. %
We use the divergence under $f(u) = u \log u - (1 + u) \log (1 - u)$ (\textit{i.e}, Jensen Shannon divergence) for better numerical stability. We perform likelihood-free importance weighting after 100 episodes to allow a decent amount of slow experiences to be collected. We include more experimental details in Appendix~\ref{app:exp}.

\subsection{Evaluation}

\begin{table}[htbp]
\centering
\caption{Max-performance attained by a given environment timestep for the Mujoco control
tasks. We report the mean maximum attained performance over 5 random seeds and standrad deviation.
}
\begin{tabular}{lccccc}
\toprule
Env               & Hopper-v2 & Walker-v2 & Cheetah-v2 & Ant-v2  & Humanoid-v2 \\ \midrule
Timesteps         & 1M     & 1M     & 1M      & 1M   & 5M      \\ \midrule
SAC~\cite{haarnoja2018soft}               &  2004 $\pm$ 356 &  \textbf{3862} $\pm$ 106 &   6548 $\pm$ 635 & 3138 $\pm$ 283 &  5515 $\pm$ 329  \\
SAC + PER~\cite{schaul2015prioritized}         &  1853 $\pm$ 106 &  3210 $\pm$ 418 &  6816 $\pm$ 531  & 2853 $\pm$ 132 &   4650 $\pm$ 315  \\
SAC + ERE~\cite{wang2019boosting} &  1759 $\pm$ 234 &  3601 $\pm$ 485 &  6666 $\pm$ 589  & 3346 $\pm$ 116 &   5586 $\pm$ 705  \\
SAC + LFIW        &  \textbf{2395} $\pm$ 212 &  \textbf{3855} $\pm$ 224 & \textbf{7037} $\pm$ 629 & \textbf{3857} $\pm$ 221 &   \textbf{6436} $\pm$ 254   \\
\midrule \midrule
TD3~\cite{fujimoto2018addressing}               &  2486 $\pm$ 125 &  4212 $\pm$ 456  &  4295 $\pm$ 523  & 2969 $\pm$ 202 & -    \\
TD3 + PER~\cite{schaul2015prioritized}         &  1704 $\pm$ 228 & 4268 $\pm$ 278 &   4766 $\pm$ 325  & 3679 $\pm$ 156 & -    \\
TD3 + LFIW        &  \textbf{3003} $\pm$ 261 & \textbf{5159} $\pm$ 189  &  \textbf{6184} $\pm$ 876   & \textbf{3864} $\pm$ 205 & -    \\
\bottomrule
\end{tabular}

\end{table}

We use (+LFIW) to denote our likelihood-free importance weighting method and (+PER) to denote prioritization with TD error~\cite{schaul2015prioritized}~\footnote{We use $\alpha = 0.6, \beta = 0.4$ in PER.}.
Along with PER, we also use Emphasizing Recent Experience (ERE,~\cite{wang2019boosting}) as a baseline experience replay method for SAC.
Figure~\ref{fig:sac-results} and Figure~\ref{fig:td3-results} shows the total average return of the evaluation episodes during training for methods based on SAC and TD3 respectively. We use five different random seeds for each algorithm and environment combination, where each evaluation is performed every 1000 timesteps. The solid curves correspond to the mean and the shaded region correspond to one standard deviation. Humanoid on TD3 is not reported as the original TD3 method fails to learn the policy properly.

The empirical results demonstrates that in terms of sample complexity and final performance, our LFIW method is able to outperform baseline methods on most tasks and has similar performance on the remaining tasks; we achieve a noticeable improvement on {HalfCheetah}, {Ant} and {Humanoid} with SAC, and on {HalfCheetah}, {Walker} and {Ant} with TD3. In these cases, LFIW is able to achieve faster improvements compared to the baselines; we attribute this to better $Q$-function estimation for the current policy, when the replay buffer contains experiences with worse performance. 

In comparison, PER and ERE do not perform very favorably against uniform sampling; a similar phenomenon has been observed by~\cite{novati2018remember} for PER on other actor-critic algorithms, such as DDPG~\cite{lillicrap2015continuous} and PPO~\cite{schulman2017proximal}. We also considered combining PER with LFIW, but achieved little initial success. We believe this is the case because PER is designed for $Q$-learning instead of actor-critic methods, where learning the max $Q$-function is the objective. 

In other cases, our LFIW method achieves similar performance compared to the baselines. One interesting case is the Hopper environment for SAC and TD3, where the initial performance of LFIW rises faster than the baselines but then plateaus at a similar level as the baselines (e.g. starting from 1200 episodes). We notice that in this case, the SAC performance reaches near-optimal at around 200 episodes; the ``slow'' replay buffer has a size of $10^6$, so the replay buffer at 1200 episodes contains almost entirely experiences from the near-optimal policy. Performing LFIW brings little improvement in this case as the slow replay buffer already contains experiences that are near on-policy.

\subsection{Additional analyses}
To illustrate the advantage of our method, we perform further analyses over the classification accuracy of $w_\psi$ and the quality of the $Q$-function estimates over the Humanoid-v2 environment trained with SAC and SAC + LFIW. In Appendix~\ref{app:exp}, we also include additional ablation studies over the hyperparameters introduced by LFIW, including temperature $T$, replay buffer size $|\gD_f|$ and number of hidden units in $w_\psi$. We observe that SAC + LFIW is insensitive to these hyperparameter changes.

\paragraph{Accuracy of $w_\psi$} We use $w_\psi$ to discriminate two types of experiences; experiences sampled from the policy trained with SAC for 5M steps are labeled positive, and the mixture of experiences sampled from policies trained for 1M to 4M steps are labeled negative. With the $w_\psi$ predictions, we obtain a precision of $87.3\%$ and an accuracy of $73.1\%$. This suggests that the importance weights tends to be higher for on-policy data as desired, and the weights indeed allows the replay buffer to be closer to on-policy experiences.

\paragraph{Quality of $Q$-estimates} We compare the quality of the $Q$-estimates between SAC and SAC+LFIW, where we sample 20 trajectories from each policy, and obtain the ``ground truth'' via Monte Carlo estimates of the true $Q$-value. We then evaluate the learned $Q$-function estimates and compare their correlations with the ground truth values. For the SAC case, the Pearson and Spearman correlations are $0.41$ and $0.11$ respectively, whereas for SAC+LFIW method they are $0.74$ and $0.42$ (higher is better). 
This shows how our $Q$-function estimates are much more reflective of the ``true'' values, which explains the improvements in sample complexity and the performance of the learned policy.
\section{Conclusion}
In this paper, we propose a principled approach to prioritized experience replay for TD-learning of $Q$-value functions in actor-critic methods, where we re-weigh the replay buffer to be closer to on-policy experiences.
To implement this in practice, we assign weights to the replay buffer based on their estimated density ratios against the stationary distribution. 
These density ratios are estimated via samples from fast and slow replay buffers, which reflect on-policy and off-policy experiences respectively. 
Our methods can be readily applied to deep reinforcement learning methods based on actor-critic approaches. Empirical results on SAC and TD3 demonstrate that our method based on likelihood-free importance weighting is able to achieve superior sample complexity on most of the challenging Mujoco environments compared to other methods. %

In future work, 
we are interested in extending our work to larger-scale discrete environments such as Atari~\cite{bellemare2016unifying} and investigate proper representations for better estimation of the density ratios. It would also be interesting to apply a prioritization scheme with a learned dynamics model of the environment, which could facilitate better planning in model-based reinforcement learning.

\section*{Acknowledgements}
This research was supported by AFOSR (FA9550-19-1-0024), NSF (\#1651565, \#1522054, \#1733686), ONR, and FLI.
We would like to acknowledge Nvidia for donating DGX-1, and Vector Institute 
for providing resources for this research. 

\bibliographystyle{plainnat}
\bibliography{bib}

\onecolumn
\appendix

\section{Algorithm}
\label{app:alg}
\begin{algorithm}[h]
\caption{Actor Critic with Likelihood-free Importance Weighted Experience Replay}
\label{alg:ac-lfiw}
\begin{algorithmic}[1]
\REPEAT 
\FOR{each environment step} 
\STATE gather new transition tuples $(s, a, r, s')$
\STATE update $(s, a, s, s')$ to $\gD_{\mathrm{s}}$ (slow replay buffer) and $\gD_{\mathrm{f}}$ (fast replay buffer)
\ENDFOR
\STATE remove stale experiences in $\gD_{\mathrm{s}}, \gD_{\mathrm{f}}$ ($|\gD_{\mathrm{f}}| < |\gD_{\mathrm{s}}|$)

\IF{$|\gD_{\mathrm{s}}|$ exceeds some threshold}
\STATE obtain samples from $\gD_{\mathrm{s}}$ and $\gD_{\mathrm{f}}$
\STATE update $w_\psi$ with loss function $L_w(\psi)$ (Eq.~\ref{eq:w-train-obj})
\STATE assign $\tilde{w}_\psi$ according to Eq.~\ref{eq:w-tilde}
\ELSE
\STATE $\tilde{w}_\psi = 1$ (no re-weighting)
\ENDIF
\STATE obtain estimates for $B^\pi Q_\theta$ with base algorithm
\STATE update $Q_\theta$ with loss function $L_Q(\theta; \gD_{\mathrm{s}}, \tilde{w})$ (Eq.~\ref{eq:final-q-obj})
\STATE update $\pi_\phi$ and value network (if available) with base algorithm
\UNTIL{Stopping criterion}
\RETURN{$Q_\theta, \pi_\phi$}

\end{algorithmic}
\end{algorithm}

\section{Proofs}
\label{app:proof}
\contraction*

\begin{proof}
From the definitions of $\norm{\cdot}_d$ and $\gB^\pi$, we have:
\begin{align}
    & \norm{\gB^\pi Q - \gB^\pi Q'}^2_{d} \\ %
    = \ & \bb{E}_{(s, a) \sim d}[(\gamma \bb{E}_{s', a'}[Q(s', a')] - \gamma \bb{E}_{s', a' }[Q'(s', a')])^2] \nonumber \\
    = \ & \gamma^2  \bb{E}_{(s, a) \sim d}[(\bb{E}_{s', a'}[Q(s', a') - Q'(s', a')])^2] \nonumber \\
    \leq  \ & \gamma^2 \bb{E}_{(s, a) \sim d}[\bb{E}_{s', a'}[(Q(s', a') - Q'(s', a'))^2]] \label{eq:contraction-proof-jensen} \\
    = \ & \gamma^2 \bb{E}_{(s, a) \sim d'}[(Q(s, a) - Q'(s, a))^2] \label{eq:contraction-proof-stationary} \\
    = \ & \gamma^2 \norm{Q - Q'}^2_{d'} 
\end{align}
where $s' \sim P(\cdot | s, a), a' \sim \pi(\cdot | s')$ and $$d'(s', a') = \sum_{s, a} P(s' | s, a) \pi(a' | s') d(s, a)$$ represents the state-action distribution of the next step when the current distribution is $d$. We use Jensen's inequality over the convex function $(\cdot)^2$ in Eq.~\ref{eq:contraction-proof-jensen}. 
Since $d^\pi$ is the stationary distribution, $d = d' \iff d = d^\pi, a.e.$, so the if direction holds. 

For the ``only if'' case, we construct a counter-example for all $d \neq d'$. Without loss of generality, assume $\forall s, a \in \gS \times \gA, Q'(\vx) = 0$. The following functionals of $Q$
\begin{align*}
    h(Q) & := \norm{\gB^\pi Q - \gB^\pi Q'}^2_{d} / \gamma^2 \\ & =  \bb{E}_{(s, a) \sim d}[(\bb{E}_{s', a'}[Q(s', a')])^2] \\
    g(Q) & := \norm{Q - Q'}^2_{d} \\ & = \bb{E}_{(s, a) \sim d}[Q(s, a)^2]
\end{align*}
corresponds to the quantities at the two ends of the contraction argument. Our goal is to find some $Q \in \gQ$ such that $h(Q) - g(Q) > 0$, which would complete the contradiction. 
We can evaluate the functional derivatives for $h(Q)$ and $g(Q)$:
\begin{gather*}
    \frac{\diff h}{\diff Q}(s', a') = 2 \sum_{s, a} d(s, a) \gE(s, a) P(s' | s, a) \pi(a' | s') \\
    \frac{\diff g}{\diff Q}(s, a) = 2 d(s, a) Q(s, a)
\end{gather*}
where $\gE(s, a) = \bb{E}_{s'' \sim P(\cdot | s, a), a'' \sim \pi(\cdot | s'')}[Q(s'', a'')]$ is the expected $Q$ function of the next step when the current step is at $(s, a)$. Now let us consider some $Q_0$ such that for some constant $q > 0$, $\forall s, a \in \gS \times \gA, Q_0(s, a) = q$ . Let us then evaluate both functional derivatives at $Q_0$:
\begin{align*}
    \left.\frac{\diff h}{\diff Q}(s', a')\right\vert_{Q_0} = & \ 2 \sum_{s, a} d(s, a) q P(s' | s, a) \pi(a' | s') \\
    = & \ 2 d'(s', a') q \\
    \left.\frac{\diff g}{\diff Q}(s, a)\right\vert_{Q_0} = & \ 2 d(s, a) q
\end{align*}
where $\gE(s, a) = q$ under the current $Q_0$. Because $d'$ and $d$ are not equal almost everywhere (from the assumption of $d$ not being the stationary distribution), there must exist some non-empty open set $\Gamma \in \gS \times \gA$ where $\int_\Gamma (d'(s, a) - d(s, a)) \diff s \diff a > 0$. We can then add a function $\epsilon: \gS \times \gA \to \R$ such that $\epsilon(s, a) = \nu \cdot \mathbb{I}((s, a) \in \Gamma)$ where $\mathbb{I}$ is the indicator function and $\nu$ is an infinitesimal amount. Now let us evaluate $(h - g)$ at $(Q_0 + \epsilon)$:
\begin{align*}
     & (h - g)(Q_0 + \epsilon) \\ 
    = \ &  (h - g)Q_0 + \left(\frac{\diff h}{\diff Q} - \frac{\diff g}{\diff Q} \right) \epsilon + o(\nu) \nonumber \\
    = \ & (q^2 - q^2) + 2q\int_\Gamma (d'(s, a) - d(s, a)) \nu \diff s \diff a + o(\nu) > 0
\end{align*}
Therefore, the proposed function $(Q + \epsilon)$ is the contradiction we need.
\end{proof}

\section{Additional Experimental Details}
\label{app:exp}

\subsection{Setup on Chain MDP}
The chain MDP considered has deterministic transitions, so we make the policy stochastic to make sure that a stationary distribution exists. In each epoch over all the state-action pairs, we use the following TD learning update over tabular data to simulate the effect of weighting with fixed learning rate $\eta$:
\begin{align}
   Q(s, a) \rightarrow Q(s, a) + (1 - (1 -\eta)^{w(s, a)}) (\gB^\pi Q(s, a) - Q(s, a))
\end{align}
where $w(s, a)$ is the weight (uniform, TD error or $d^\pi$) which simulates the number of TD updates with learning rate $\eta$. The weights are normalized to have a mean value of $1$ which makes number of updates per epoch the same across different methods.

\begin{table}[h]
    \centering
    \caption{Additional hyperparameters for SAC~\cite{haarnoja2018soft}}
    \begin{tabular}{c|c}
    \toprule
        Parameter & Value \\\midrule
        optimizer & Adam~\cite{kingma2014adam} \\
        learning rate & $3 \times 10^{-4}$\\
        discount & $0.99$ \\
        number of samples per minibatch & 256 \\
        nonlinearity & ReLU\\
        target smoothing coefficient & $5 \times 10^{-3}$ \\
         \bottomrule
    \end{tabular}
    
    \label{tab:hyper-sac}
\end{table}

\begin{table}[h]
    \centering
    \caption{Additional hyperparameters for TD3~\cite{fujimoto2018addressing}}
    \begin{tabular}{c|c}
    \toprule
        Parameter & Value \\\midrule
        optimizer & Adam~\cite{kingma2014adam}\\
        learning rate & $10^{-3}$\\
        discount & $0.99$ \\
        number of samples per minibatch & 256 \\
        nonlinearity & ReLU \\
        exploration policy & $\gN(0, 1)$ \\
         \bottomrule
    \end{tabular}
    
    \label{tab:hyper-td3}
\end{table}

\subsection{Ablation studies}

To demonstrate the stability of our method across different hyperparameters, we conduct further analyses over the key hyperparameters, %
including the temperature $T$ in Eq.~\ref{eq:w-tilde}, the size of the fast replay buffer $|\gD_{\mathrm{f}}|$, and the number of hidden units in the classifier model $w_\psi$. We consider running the SAC+LFIW method on the Walker-v2 environment with 1000 episodes using all the default hyperparameters unless explicitly changed. 

\paragraph{Temperature $T$} The temperature $T$ affects the variances of the weights assigned; a larger $T$ makes the weights more similar to each other, while a smaller $T$ relies more on the outputs of the classifier. Since we are using finite replay buffers, using a larger temperature reduces the chances of negatively impacting performance due to $w_\psi$ overfitting the data. We consider $T = 1, 2.5, 5, 7.5, 10$ in Figure~\ref{fig:ablation:t}; all cases have similar sample efficiencies except for $T = 1$.
Similarly, we also perform a similar analysis on Humanoid-v2 with SAC in Figure~\ref{fig:humanoid_temperature}.
We observe a similar dependency on $T$ as in Walker where the sample efficiency with $T = 1$ 
is significantly worse that for the other hyperparameters considered, which shows that 
overfitting the data can easily be avoided by using a higher temperature value even in 
higher-dimensional state-action distributions.

\paragraph{Replay buffer sizes $|\gD_{\mathrm{f}}|$} The replay buffer sizes $|\gD_{\mathrm{f}}|$ affects the amount of experiences we treat as ``on-policy''. Larger $|\gD_{\mathrm{f}}|$ reduces the risk of overfitting while increasing the chances of including more off-policy data. We consider $|\gD_{\mathrm{f}}| = 1000, 10000, 50000, 100000$, corresponding to $1$ to $100$ episodes. We note that $|\gD_{\mathrm{s}}| = 10^6$, so even for the largest $\gD_{\mathrm{f}}$, $\gD_{\mathrm{s}}$ is significantly larger. The performance are relatively stable despite a small drop for $|\gD_{\mathrm{f}}| = 100000$.

\paragraph{Hidden units of $w_\psi$} The number of hidden units at each layer affects the expressiveness of the neural network. While networks with more hidden units are more expressive, they are easier to overfit to the replay buffers. We consider hidden layers with $128, 256$ and $512$ neurons respectively. While the smaller network with $128$ units is able to achieve superior performance initially, the other configurations are able to catch up at around 1000 episodes.

\newpage

\begin{figure*}
    \centering
\begin{subfigure}{0.48\textwidth}
\includegraphics[width=\textwidth]{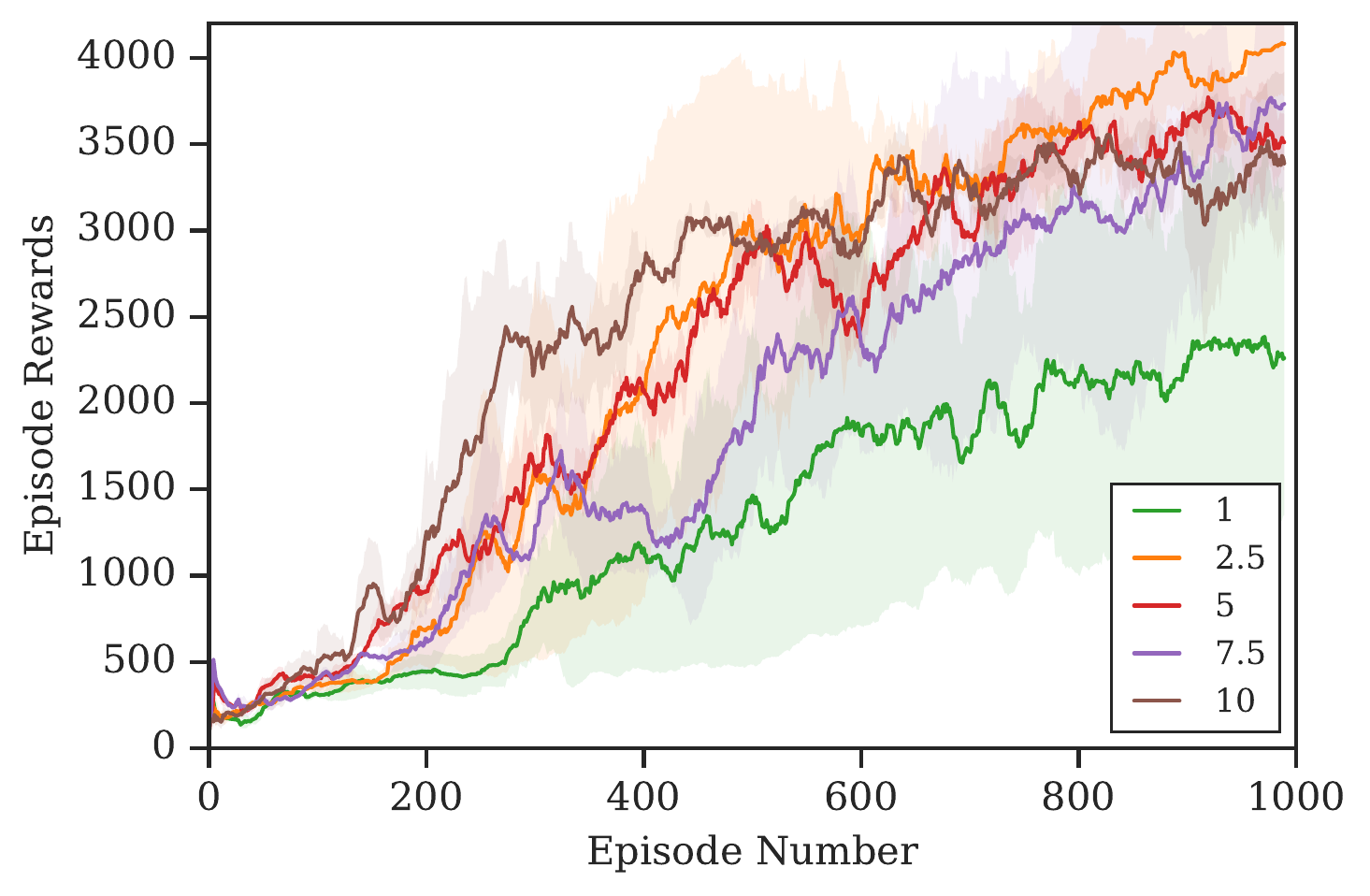}
\caption{Temperature $T$}
\label{fig:ablation:t}
\end{subfigure}
~
\begin{subfigure}{0.48\textwidth}
\includegraphics[width=\textwidth]{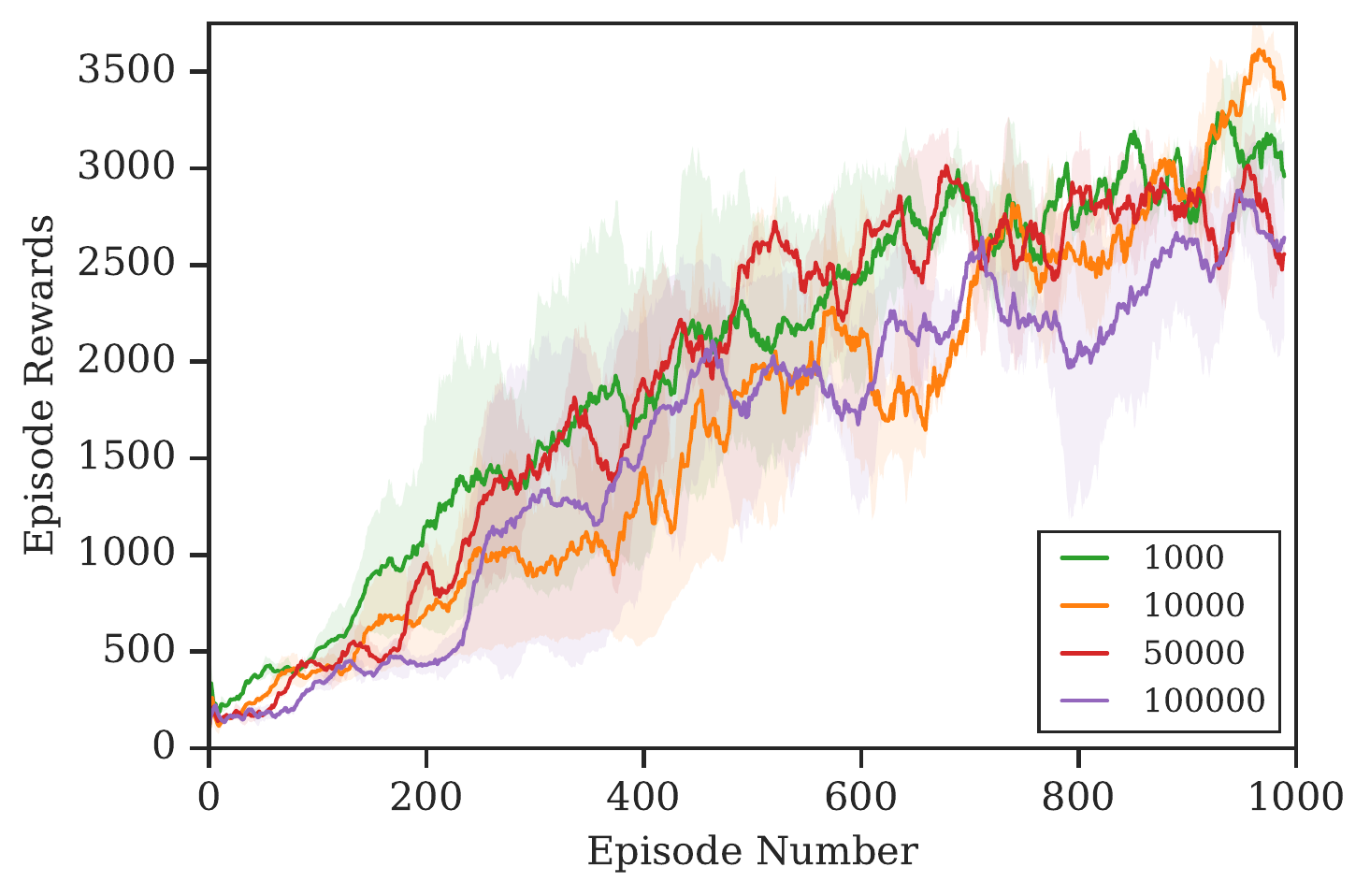}
\caption{Fast replay buffer size $|\gD_{\mathrm{f}}|$}
\label{fig:ablation:f}
\end{subfigure}
~
\begin{subfigure}{0.5\textwidth}
\includegraphics[width=\textwidth]{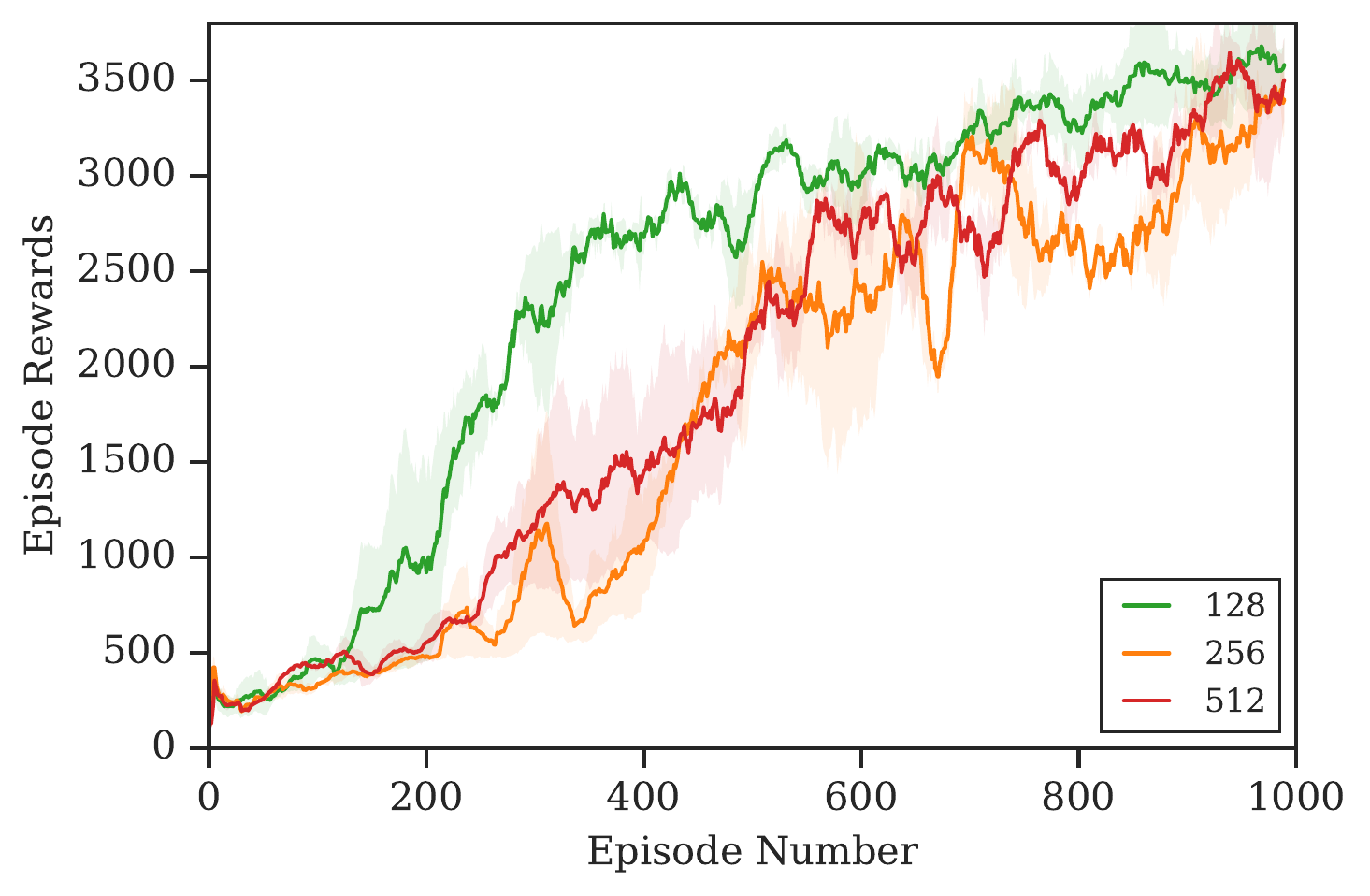}
\caption{Hidden layer size of $w_\psi$}
\label{fig:ablation:h}
\end{subfigure}
~
\caption{Hyperparameter sensitivity analyses on Walker2d-v2 with SAC.}
    \label{fig:ablation}
\end{figure*}

\begin{figure}[]
    \centering
    \includegraphics[width=0.6\textwidth]{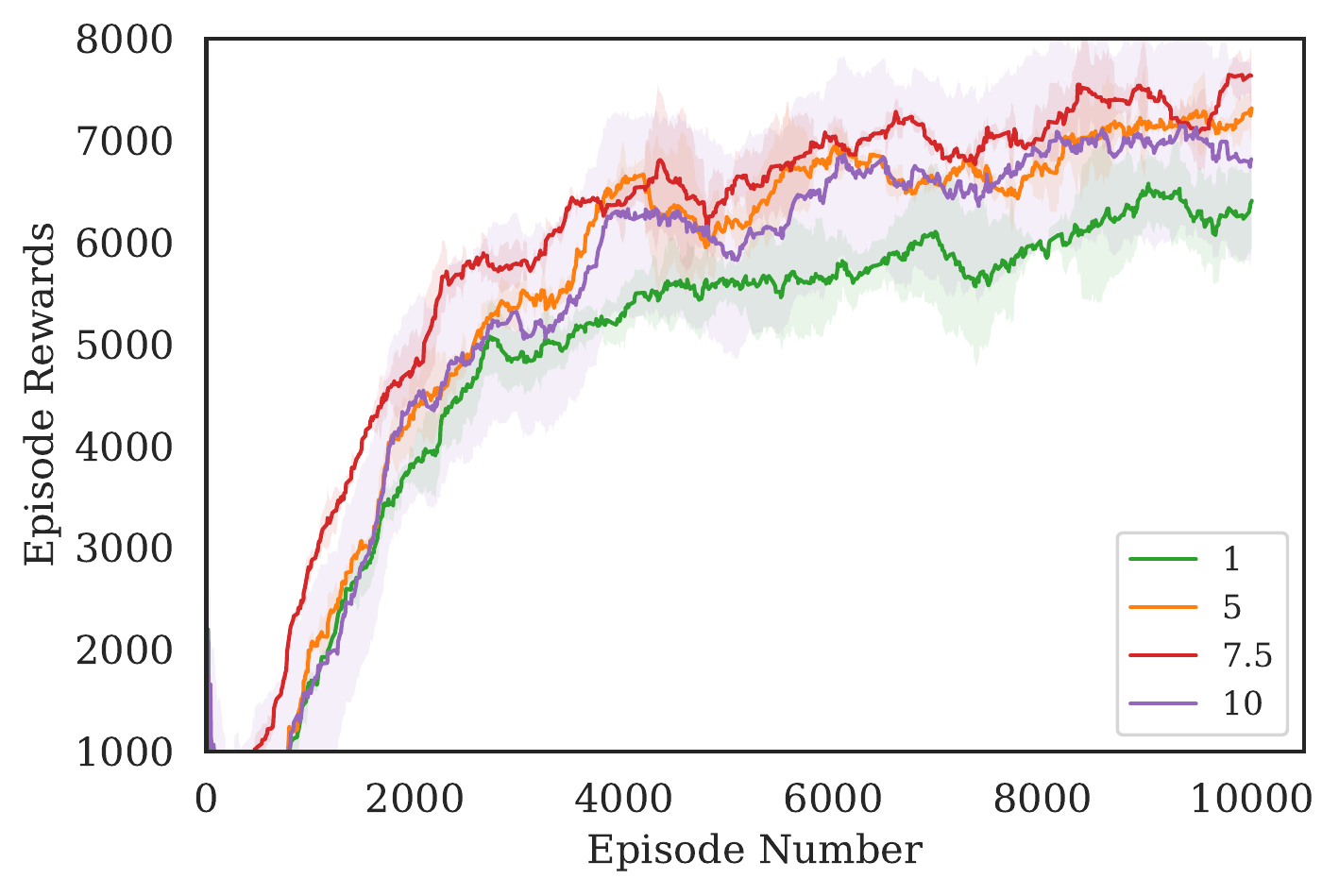}
    \caption{Temperature sensitivity on Humanoid-v2 with SAC}
    \label{fig:humanoid_temperature}
\end{figure}

\end{document}